\SilentMatrices \SelectTips{cm}{}
\tikzset{|/.tip={Bar[width=.8ex,round]}}
\DeclareMathAlphabet{\mathsfit}{\encodingdefault}{\sfdefault}{m}{sl}
\SetMathAlphabet{\mathsfit}{bold}{\encodingdefault}{\sfdefault}{bx}{n}
\newcommand{\remspace}[1]{}
\newcommand{\bfA}{\mathbf{A}}
\newcommand{\bfB}{\mathbf{B}}
\newcommand{\bfQ}{\mathbf{Q}}
\newcommand{\bfR}{\mathbf{R}}
\newcommand{\bfT}{\mathbf{T}}
\newcommand{\bfU}{\mathbf{U}}
\newcommand{\bfV}{\mathbf{V}}
\newcommand{\bfW}{\mathbf{W}}
\newcommand{\bfX}{\mathbf{X}}
\newcommand{\boldrho}{\boldsymbol \rho}
\newcommand{\bfa}{\mathbf{a}}
\newcommand{\bfk}{\mathbf{k}}
\newcommand{\bfn}{\mathbf{n}}
\newcommand{\bfd}{\mathbf{d}}
\newcommand{\bfdred}{\mathbf{d}^{\text{\rm red}}}
\newcommand{\dred}{d^{\rm red}}
\newcommand{\inv}{^{-1}}
\newcommand{\id}{\text{\rm Id}}
\newcommand{\red}{^{\text{\rm red}}}
\newcommand{\inx}{i}
\newcommand{\inc}{\text{\rm Inc}}
\newcommand{\Q}{\mathbb{Q}}
\newcommand{\R}{\mathbb{R}}
\renewcommand{\L}{\mathcal{L}}
\newcommand{\Rep}{\mathsf{Rep}}
\newcommand{\quiver}{\mathscr{Q}}
\newcommand{\bfdhid}{\bfd^\text{\rm hid}}
\newcommand{\biasvercol}{cyan}
\newcommand{\Par}{\mathsf{Param}}
\newcommand{\Parqd}{\Par(\quiver, \bfd)}
\newcommand{\Parqdred}{\Par(\quiver, \bfdred)}
\newcommand{\Parintqd}{\Par^{\text{\rm int}} (\quiver,  \bfd)}
\newcommand{\Proj}{\mathrm{Proj}}
\theoremstyle{plain}
\newtheorem{theorem}{Theorem}[section]
\newtheorem{prop}[theorem]{Proposition}
\newtheorem{lemma}[theorem]{Lemma}
\newtheorem{cor}[theorem]{Corollary}
\theoremstyle{definition}
\newtheorem{rmk}[theorem]{Remark}
\numberwithin{equation}{section}
\def\eqref#1{equation~\ref{#1}}
\def\1{\bm{1}}
\newcommand{\rwa}[1] {}
\DeclareMathAlphabet{\mathsfit}{\encodingdefault}{\sfdefault}{m}{sl}
\SetMathAlphabet{\mathsfit}{bold}{\encodingdefault}{\sfdefault}{bx}{n}
\title[Quiver neural networks]{Quiver neural networks}
\author[Iordan Ganev and Robin Walters]{Iordan Ganev \qquad Robin Walters}
\address[IG]{\rm Radboud University, Nijmegen, Netherlands. \texttt{iganev@cs.ru.nl}}
\address[RW]{\rm Northeastern University, Boston, MA, USA. \texttt{r.walters@northeastern.edu}}
\begin{document}

\maketitle

\begin{abstract}
We develop a uniform theoretical approach towards the analysis of various neural network connectivity architectures by introducing the notion of a {\it quiver neural network}. Inspired by quiver representation theory in mathematics, this approach gives a compact way to capture elaborate data flows in complex network architectures. As an application, we use parameter space symmetries to  prove a lossless model compression algorithm for quiver neural networks with certain non-pointwise activations known as {\it rescaling activations}. 
In the case of {\it radial} rescaling activations, we prove that training the compressed model with gradient descent is equivalent to training the original model with projected gradient descent. 
\end{abstract}

\parskip = 10pt
\parindent = 0pt
	
\section{Introduction}\label{sec:intro}

In recent years, the study of deep neural networks has advanced from the basic case of sequential multilayer perceptrons to incorporating more elaborate structures, including  skip connections, multiple inputs with different features pathways, multiple output heads, aggregations, concatenations, and  splitting of features. As a result, terms such as `layer' and `depth' come across as insufficiently descriptive for capturing the interdependencies of the hidden feature spaces.  In this paper, we propose {\it quiver neural networks} as a convenient formal tool for schematically describing the data flow in complex network structures. 

Our approach stems from representation theory, a branch of mathematics concerned with the formal study of equivariance and symmetry. Representation-theoretic perspectives are increasingly influential  in deep learning theory, and have been especially successful in the context of equivariant neural networks, where one exploits symmetries of the input or output spaces, and considers distributions and functions that respect these symmetries  \cite{cohen2016group, kondor_generalization_2018, ravanbakhsh2017equivariance, cohen2016steerable}. Our techniques, by contrast, focus on parameter space symmetries, and consequently pertain to some degree to all neural networks. Our methods adopt constructions from quiver representation theory,  an active research area in mathematics with connections to Lie theory and symplectic geometry \cite{nakajima1998quiver, kirillov2016quiver}. We also build on earlier work applying quiver representation theory to deep learning \cite{armenta_double_2021, armenta_representation_2020, armenta_neural_2021, jeffreys_kahler_2021}.  

Formally, a {\it quiver} is another term for a finite directed graph. In our constructions, the vertices of the quiver indicate the layers of a neural network architecture, while the edges indicate the data flows interconnecting the layers.  A {\it representation} of a quiver is the assignment of a vector space (i.e., a hidden feature space) to each vertex and a compatible linear map (i.e., a weight matrix) to each edge. Quiver representations, together with a non-linearity at each hidden vertex, combine to produce the structure of a neural network, which we refer to a quiver neural network. Figure \ref{fig:MLP-NN} illustrates the procedure in the case of a sequential network with three layers. 

The formalism of quiver neural networks has several distinct advantages. First, it captures the underlying network connectivity in the simplest possible way, precipitating a uniform analysis of many different data flow architectures including multilayer perceptrons (MLPs) and other basic sequential networks; skip connections; multiple input, output, and internal pathways; aggregations; and others \cite{mohammed_y-net_2018, shelhamer_fully_2017, huang_densely_2017, zilly_recurrent_2017}. 
Second, the quiver perspective makes parameter space symmetries more apparent. Specifically, the space of quiver representations has a natural group action; in our formalism, this means one can package the trainable network parameters into a vector space with rich symmetries. 
Finally, factoring out parameter space symmetries leads to a geometric rephrasing of neural network optimization problems in terms of quiver varieties, where favorable convexity properties are conjectured to exist. 

\begin{figure}[t]
	\label{fig:MLP-NN}
	\begin{center}
	\centering
	\includegraphics[width=0.85\textwidth]{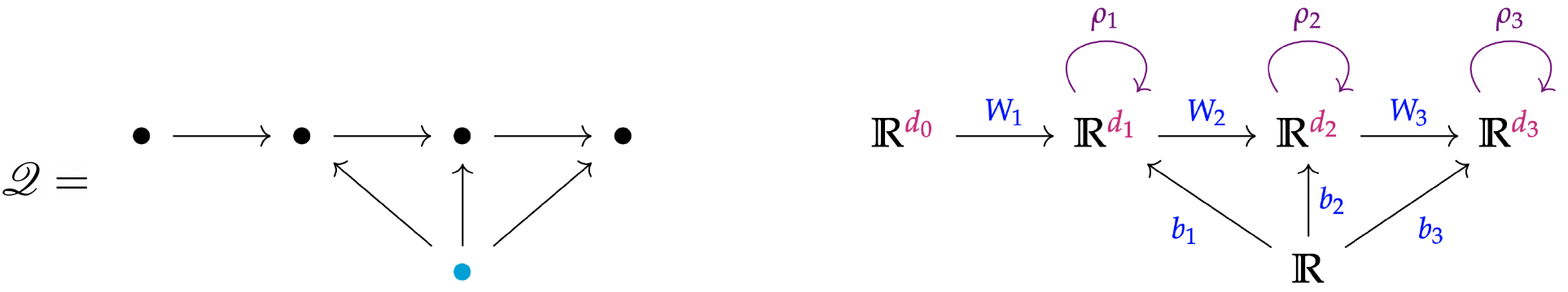}
\end{center}         
	
\caption{(Left) An example of a quiver.  Each vertex in the top row represents a layer; the bottom vertex is the `bias' vertex. (Right) A quiver neural network. We assign a dimension to each non-bias vertex, linear map to each edge, and an activation to each non-source vertex. When each $\rho_i$ is a pointwise activation, we obtain a three-layer multilayer perceptron. }
\end{figure}

As our main application, we generalize the model compression results of \cite{ganev_universal_2022} to quiver neural networks. These results apply to neural networks with {\it rescaling activations}, rather than usual pointwise activations. Such activations rescale each activation input by a scalar-valued function to produce the activation output.  A special class of rescaling activations are {\it radial} rescaling activations, where the rescaling factor depends only on the norm of the input vector. Neural networks with the radial rescaling version of ReLU (known as Step-ReLU, see Section \ref{sec:rescaling-MC}) are universal approximators \cite{ganev_universal_2022}.

Our model compression algorithm for quiver neural networks with rescaling activations proceeds via iterated QR decompositions over a topological ordering of the vertices. At each layer, one extracts the weights of the compressed model from the upper-triangular matrix `$R$', and uses the orthogonal matrix `$Q$' to modify the activation to a different rescaling function. The output of the algorithm is a quiver neural network with the same underlying quiver, a lower-dimensional feature space at each layer, and new rescaling activations. A key feature of our model compression algorithm is that it is {\it lossless}: the compressed network has the same value of the loss function on any batch of training data as the original network. 

Moreover, we obtain stronger results in the case of radial rescaling activations, which commute orthogonal transformations and hence enjoy special  equivariance properties. When the original model has radial activation functions, the compressed model has the same activations restricted to the appropriate subspace; that is, the compression algorithm does not require updating radial rescaling activations. Furthermore, use of radial activations leads to a precise interaction between model compression and training. While training the compressed model via gradient descent does not yield the same result as training the original model, we prove an explicit mathematical relationship between the two procedures, provided that (1) the model has radial rescaling activations and (2) one trains the original model with {\it projected gradient descent}.  As explained below, projected gradient descent involves zeroing out specific matrix entries after each step of gradient descent. When the compression is significant enough, the compressed model reaches a local minimum faster than the original model.

To summarize, our specific  contributions include:\vspace{-5pt}
\begin{enumerate}
	\item A theoretical framework for neural networks based on quiver representation theory;
	\item An implementation of a lossless model compression algorithm for quiver neural networks with rescaling activations. 
    \item A refinement of this algorithm for  radial rescaling activations, based on equivariance properties.
	\item A theorem relating gradient descent optimization of the original and compressed networks, in the case of radial rescaling activations. 
\end{enumerate}

\paragraph{\bf Terminology} To avoid ambiguity, we remark on terminology used in this paper. The terms `quiver' and `finite directed graph' are synonymous, but the terms `quiver neural network' and `graph neural network' are distinct. In the latter, the feature spaces are function spaces on the vertex set of a graph \cite{kipf2016semi}. By contrast, for quiver neural networks, the feature spaces are not assumed to have any special structure, while the network connectivity  (number of layers, interconnections between the layers) is specified by a quiver. Additionally, we do not use the expression `representation of a quiver' in technical sections of the main text, only as background and in the appendix. Instead, we consider the parameter space associated to a quiver.

\section{Related work}\label{sec:related-work}	

\paragraph{\bf Quiver representation theory and neural networks.}
This work generalizes results in \cite{ganev_universal_2022} from the case of basic sequential networks to any quiver neural network (an earlier version of loc.\ cit.\ discussed quivers). 
There has been a number of ground-breaking works relating neural networks to quiver representations \cite{armenta_representation_2020, armenta_double_2021, armenta_neural_2021}. As far as we can tell, our current paper strengthens these relations as it (1) captures all architectures appearing thus far, (2) accommodates both pointwise and non-pointwise activation functions, and (3) proves generalizations to larger symmetry groups.
Our work is also influenced by algebro-geometric and categorical perspectives placing quiver varieties in the context of neural networks  \cite{jeffreys_kahler_2021, manin_homotopy_2020}. At the same time, we place more emphasis  practical consequences for optimization techniques at the core of machine learning. 
Our approach also shares parallels with works where quivers are not explicitly mentioned. 
For example, \cite{wood_representation_1996} consider neural networks in which each layer is a representation of a finite group, and activations are pointwise;  our  framework, however, accommodates Lie groups as well as non-pointwise activations.  Also, special cases of the ``algebraic neural networks'' of \cite{parada-mayorga_algebraic_2020} are equivalent to representations of quivers over rings of polynomials. Finally, the study of the ``non-negative homogeneity'' (or ``positive scaling invariance'') property of ReLU activations \cite{dinh_sharp_2017, neyshabur_path-sgd_2015, meng_g-sgd_2019}  is a special case of the results appearing in our work. 

\paragraph{\bf Rescaling activations.}  The special case of radial rescaling activations have been studied from several perspectives. Radial rescaling   functions have the symmetry property of preserving vector directions, and hence exhibit rotation equivariance.  Consequently, examples of such functions, such as the squashing nonlinearity and Norm-ReLU,  feature in the study of  rotationally equivariant neural networks   \cite{weiler_general_2019, sabour2017dynamic, weiler20183d, weiler2018learning, jeffreys_kahler_2021}.  
From a different direction, in the  vector neurons formalism \cite{deng_vector_2021} the output of a nonlinearity is a vector rather than a scalar; rescaling activations are an example. 
For radial basis networks, each hidden neuron is a radial nonlinear function of the shifted input vector, but the outputs are independent, whereas for rescaling functions, the outputs are also linked together \cite{broomhead1988radial}. 

\paragraph{\bf Equivariant neural networks.} Representation-theoretic techniques appear in the development of equivariant neural networks; such networks are designed to incorporate symmetry as an inductive bias.  In particular, equivariant networks feature weight-sharing constraints based on equivariance or invariance with respect to various symmetry groups. Examples of equivariant architectures include   $G$-convolution, steerable CNN, and Clebsch-Gordon networks \cite{cohen2019gauge,  weiler_general_2019, cohen2016group, chidester2018rotation, kondor_generalization_2018, bao2019equivariant, worrall2017harmonic, cohen2016steerable, weiler2018learning, dieleman2016cyclic, lang2020wigner, ravanbakhsh2017equivariance}.  By contrast, the quiver representation theory approach taken in this paper does not depend on symmetries occurring in the input domain, output space, or feedforward mapping.  Instead, we exploit parameter space symmetries and thus obtain more general results that apply to domains with no apparent symmetry.  From the point of view of  model compression, equivariant networks do achieve reduction in the number of trainable parameters through weight-sharing for fixed hidden dimension widths; however, in practice, they may use larger layer widths and consequently have larger memory requirements than non-equivariant models. Sampling or summing over large symmetry groups may make equivariant models computationally slow as well \cite{finzi2020generalizing, kondor_generalization_2018}.      

\paragraph{\bf Model compression.}
Apart from \cite{ganev_universal_2022} mentioned above, our method differs significantly from most existing model compression methods  \cite{cheng2017survey} in that it is based on the inherent symmetry of neural network parameter spaces. 
One prior model compression method is \emph{weight pruning}, which removes redundant, small, or unnecessary weights from a network with little loss in accuracy \cite{han2015deep, blalock2020state, karnin1990simple}.  Pruning can be done during training or at initialization \cite{frankle2018lottery,	lee2019signal, wang2020picking}.  \emph{Gradient-based pruning} identifies low saliency weights by estimating the increase in loss resulting from their removal \cite{lecun1990optimal, hassibi1993second, dong2017learning, molchanov2016pruning}.    A complementary approach is \emph{quantization}, in which aims to decrease the bit depth of weights \cite{wu2016quantized, howard2017mobilenets, gong2014compressing}.    \textit{Knowledge distillation} works by training a small model to mimic the performance of a larger model or ensemble of models \cite{bucilua2006model,hinton2015distilling, ba2013deep}.
\textit{Matrix Factorization} methods replace fully connected layers with lower rank or sparse factored tensors \cite{cheng2015fast, cheng2015exploration, tai2015convolutional, lebedev2014speeding, rigamonti2013learning, lu2017fully} and can often be applied before training.   Our method  involves a generalized QR decomposition, which is a type of matrix factorization; however, rather than aim for a rank reduction of linear layers, we leverage this decomposition in order to reduce hidden layer widths. Our method shares similarities with lossless compression methods which aim to remove stable or redundant neurons \cite{serra2021scaling, serra2020lossless}, sometimes using symmetry \cite{sourek2020lossless}. Finally, while there are similarities between our  model compression  results and those of \cite{jeffreys_kahler_2021}, our results apply to a larger class of activations (beyond the squashing nonlinearity) and feature a group action on all layers (not just disgjoint layers). 

\section{Quiver neural networks}\label{sec:quiver-NNs}

\subsection{Motivation}

What is a neural network? From an abstract point of view, it consists of (1) a connectivity architecture indicating the arrangement and interdependencies of the layers, as well as the data flow, (2) trainable parameters capturing the linear component of the feedforward function, and (3) non-linearities that enhance the expressivity of the feedforward function. As we observe below, the connectivity architecture of a neural network corresponds to a  quiver, that is, a finite directed graph. Meanwhile, the trainable parameters define a representation of the quiver, that is, a vector space for every vertex and a compatible matrix for every edge. Finally, the non-linearities appear as transformations of the vector space at each vertex. As we will see, this perspective has the advantages of compactly defining the underlying neural network structure, packaging the trainable parameters of a neural network into a vector space, and emphasizing the parameter space symmetries, which are crucial for our results.  

\begin{figure}[t]
	\begin{align*}
	\resizebox{0.85\textwidth}{!}{	\xymatrix{ \bullet \ar[r] &\bullet \ar[r]  & \bullet \ar[r] & \bullet \ar[r] & \bullet \ar[r] &  \bullet \\ 
			&  &   & {\color{\biasvercol} \bullet} \ar[rru] \ar[ru] \ar[u] \ar[lu] \ar[llu]  &  &    }
		\qquad \qquad 
		\xymatrix{ \bullet \ar[r] &\bullet \ar[r] \ar@/^1pc/[rr]  & \bullet \ar[r] & \bullet \ar[r] \ar@/^1pc/[rr]    & \bullet  \ar[r] & \bullet  \\ 
			&  &   & {\color{\biasvercol} \bullet}  \ar[rru] \ar[ru] \ar[u] \ar[lu] \ar[llu]  &  &    } }\\
	\resizebox{0.85\textwidth}{!}{\xymatrix{ & \\ \bullet \ar[r] &\bullet \ar[r] \ar@/^2pc/[rrrr]  & \bullet \ar[r] \ar@/^1pc/[rr]  & \bullet \ar[r]    & \bullet  \ar[r] & \bullet    		\\ 
			&  &   & {\color{\biasvercol} \bullet}  \ar[rru] \ar[ru] \ar[u] \ar[lu] \ar[llu]  &  &    }
		\qquad \qquad 
		\xymatrix{ \bullet \ar[r]  & \bullet \ar[r] & \bullet  \ar[rd] \\
			& 	 & &\bullet \ar[r]  & \bullet \ar[r] & \bullet \\
			\bullet \ar[r]  & \bullet \ar[r] & \bullet\ar[ru] \\
			& & & {\color{\biasvercol} \bullet}  \ar[rruu] \ar[ruu] \ar[uu] \ar[lu] \ar[llu] \ar[luuu] \ar[lluuu]  &  &  
	} }
	\end{align*}
	\caption{Examples of neural quivers.  The bias is indicated in blue. Clockwise from top left: the MLP quiver, a quiver with skip connections, a U-net quiver, a quiver defining a network with a  "Y" structure.}
 	\label{fig:quivers}

\end{figure}
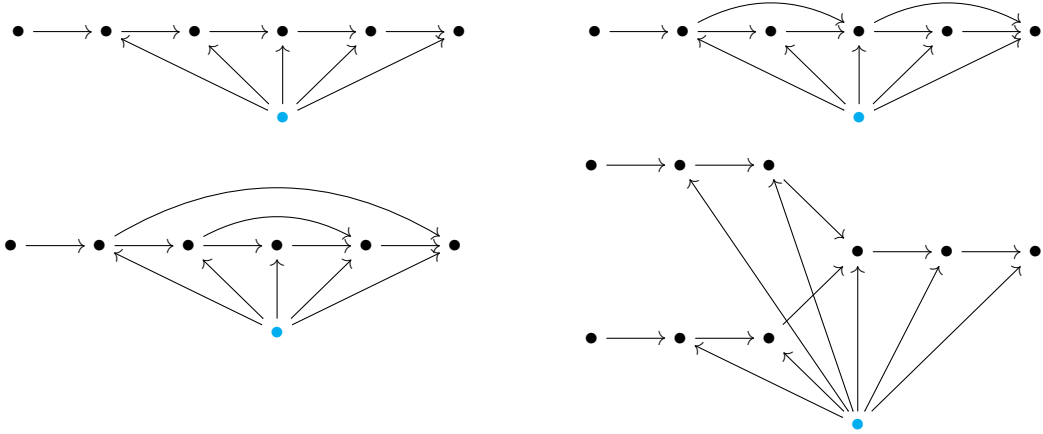

\subsection{Quivers}
A {\it quiver} $\quiver$ is a finite directed graph.
Thus, a quiver consists of a pair $\quiver = (I, E)$, where $I$ is the set of vertices and  $E$ is the set of directed edges, together with  maps $ s: E \rightarrow I$ and $t: E \to I$ indicating the source and target of each edge. 
As usual, a {\it source} in $\quiver$ is a vertex with no incoming edges, and a {\it sink} is a vertex with no outgoing edges. 
A {\it dimension vector} for a quiver  is the assignment of a dimension $d_i$ to each vertex $i$; we group these into  a tuple $\bfd = (d_i)_{i \in I}$ of positive integers indexed by the vertices. 

\subsection{Neural quivers}\label{subsec:neural-quivers}
We  define the general class of quivers relevant to neural networks and machine learning. 
A {\it neural quiver} is a connected acyclic quiver $\quiver = (I, E)$, together with a distinguished vertex $i_\text{\rm bias} \in  I$, called the {\it bias vertex}, satisfying the following condition: The bias vertex $i_\text{\rm bias}$ is a source, and  removing $i_\text{\rm bias}$ creates no new sources in $\quiver$. The condition on the bias vertex guarantees that every non-source vertex admits a path from a non-bias source vertex. We note that all results have versions in the simpler case of no bias, and   any connected acyclic quiver with more than one vertex can be extended to the structure of a neural quiver by adding an extra source vertex to play the role of the bias. 
A vertex of a neural quiver is called {\it hidden} if it is neither a source nor a sink. Let $I_\text{\rm hidden}$ denote the set of hidden vertices.
Since any neural quiver is acyclic, it admits a {\it topological order} of its vertices, i.e., an enumeration $I = \{ i_1, \dots, i_{|I|}\}$ of the vertex set such that, if there is an edge from $i_n$ to $i_m$, then $n < m$. In many of the constructions that appear below, one must fix a topological order; however, the particular choice of topological order is irrelevant.

\subsection{Quiver neural networks}\label{subsec:quiver-NNs}

We now state the definition of a quiver neural network.
Let $\quiver = (I,E)$ be a neural quiver.  A {\it $\quiver$-neural network} consists of a dimension $d_i$ for each vertex $i \in I$, with $d_{i_\text{\rm bias}} = 1$; a weight matrix $W_e \in \R^{d_{t(e)} \times d_{s(e)}}$ for every edge $e \in E$; and an activation $\rho_i : \R^{d_i} \to \R^{d_i}$ for every non-source vertex $i$. 
We group the dimensions, weight matrices, and activations into tuples: $ \bfd = (d_i)_{i \in I}$, $\bfW = (W_e)_{e \in E}$, and $\boldrho = (\rho_i)_{i \in I}$ 
where, for convenience, we set $\rho_i$ to be the identity if $i$ is a source vertex. Hence, we denote a $\quiver$-neural network as a triple of tuples: $(\bfd, \bfW, \boldsymbol{\rho})$. 
The tuple $\bfd$ is a {dimension vector} for $\quiver$. Note that the `weight matrix' assigned to an edge from the bias vertex to a vertex $i$ is of size $d_i \times 1$, so it is a column vector, or, equivalently, an element of $\R^{d_i}$. Examples of quiver neural networks include:

\begin{enumerate}
	\setlength\itemsep{5pt}
	\item  Sequential quivers, see Figure \ref{fig:MLP-NN} and Figure \ref{fig:quivers}, top left. 	In Figure \ref{fig:MLP-NN}, the weight matrix $W_i$ is of size $d_{i} \times d_{i-1}$, and the `matrix' $b_i$ corresponds to an element of $\R^{d_i}$. 
Each activation $\rho_i$ is a non-linear map from $\R^{d_i}$ to itself. When the activation functions are pointwise, the resulting quiver neural network is nothing more than an MLP. 

	\item ResNet quivers, see Figure \ref{fig:quivers}, top right and bottom left. These quivers have skip-connections, and  the resulting quiver neural networks are examples of ResNets.

	\item Multiple inputs are possible, as in Figure \ref{fig:quivers}, bottom right. For example, Y-nets are captured by our formalism \cite{mohammed_y-net_2018}.

	\item U-nets have been studied in \cite{shelhamer_fully_2017}, and are also examples of quiver neural networks, as indicated in Figure \ref{fig:quivers} bottom left\footnote{At least the non-convolutional versions thereof. We expect the quiver formalism to generalize to convolutional neural networks.}. 

\end{enumerate}

\subsection{The feedforward function}\label{subsec:feedforward}

To define the feedforward function of a quiver neural network $(\bfd, \bfW, \boldrho)$, we require some additional terminology.
We call any source vertex that is not the bias an {\it input} vertex, and any sink vertex an {\it output} vertex. Thus, the vertex set $I$ is the disjoint union of  $\{i_\text{\rm bias} \}$, $\{\text{\rm input vertices}\}$, $\{\text{\rm output vertices}\}$, and  $I_\text{\rm hidden}$. 
 
Given a dimension vector $\bfd$ for $\quiver$, set:
$d_\text{\rm in} = \sum_{i \in \{\text{\rm inputs}\}} d_i$ and $ d_\text{\rm out} = \sum_{i \in \{\text{\rm outputs}\}} d_i$
Hence, we identify $\R^{d_\text{\rm in}}$ with the direct sum of the spaces $\R^{d_i}$ as $i$ ranges over the input vertices, and similarly for $\R^{d_\text{\rm out}}$. 
Next, fix a topological order of the vertices. For every vertex $i \in I$, define a function:
$ F_i : R^{d_\text{\rm in}} \to \R^{d_i}$ 
recursively as follows. If $i$ is a source, then $F_{i}$ is the projection map. If $i = i_\text{\rm bias}$ is the bias vertex, then $F_{i} \equiv 1$  is the constant function at $1 \in  \R = \R^{d_{i_\text{\rm bias}}}$.  For all other vertices, we use the recursive definition:
	$$ F_{i}(x) = 	\rho_i  \left( \sum_{j \stackrel{e}{\rightarrow}  i} W_e \circ  F_j(x)   \right).$$
That is, to compute the hidden feature at node $i$, first loop over the incoming edges to $i$, applying the weight matrix for each edge to the feature vector at the source vertex of the edge; next, sum the results (which are all elements of $\R^{d_i}$); and, finally, apply the activation. 
The definitions of  the $F_i$ are independent of the choice of topological order, as is the following definition:	The {\it feedforward function} $F =F_{(\bfd, \bfW, \boldrho)}  : \R^{d_\text{\rm in}} \rightarrow \R^{d_\text{\rm out}}$ of the neural network $(\bfd, \bfW, \boldrho)$  is defined as $F(x) =  ( F_{i}  (x) )_{i \in  \{\text{\rm output vertices}\}}.$

\section{Rescaling activations and QR model compression}\label{sec:rescaling-MC}

In this section, we state a model compression result for quiver neural networks in which each activation belongs to a certain class of non-pointwise activations, known as a rescaling activations. We also consider the case of radial rescaling activations, which commute with orthogonal transformations. Proofs appear in Appendix \ref{app:model-compression}. 

\subsection{Rescaling activations}

A function $\rho : \R^d \to \R^d$ is said to be {\it rescaling} if it sends each vector to a scalar multiple itself, that is: $$\rho(v) = \lambda(v)v \qquad \text{\rm for all $v \in \R^d$}$$ for some scalar-valued function $\lambda : \R^d \to \R$.  We say that a $\quiver$-neural network $(\bfd, \bfW, \boldrho)$ has {\it rescaling activations} if each $\rho_i : \R^{d_i} \to \R^{d_i}$ is a rescaling function. Examples include: (1) Step-ReLU, where the rescaling factor $\lambda(v)$ is equal to $0$ if the vector $v$ is of norm less than one, and $1$ if $v$ of norm at least one.  Fully-connected MLPs with Step-ReLU activations are known to be universal approximators \cite{ganev_universal_2022}.  (2) $\lambda(v) = |v- c|$ for some $c \in \R^d$. 

\begin{figure}
	\centering
	\includegraphics[width=0.8\textwidth]{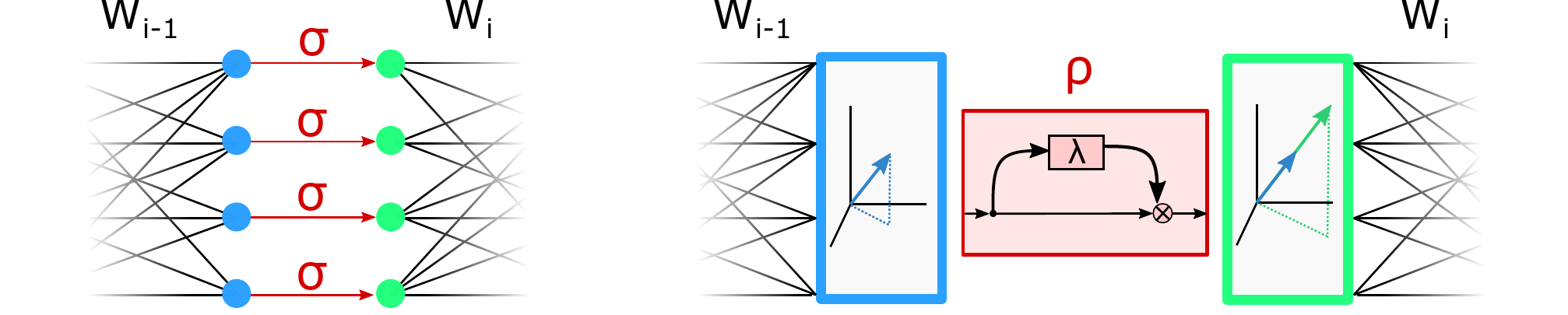}
	\caption{Figure 1: (Left) Pointwise activations distinguish a specific basis of each hidden layer and treat each coordinate independently. (Right) Rescaling activations	rescale each feature vector by a scalar-valued function $\lambda$. }
	\label{fig:rescaling}
\end{figure}

\subsection{Reduced dimension vector}
Let $\bfd$ be a dimension vector for an acyclic quiver $\quiver$. The {\it reduced} dimension vector $\bfdred$ associated to $\bfd$ is defined as follows, using a topological order of the vertices. If $i$ is a source or sink vertex, then $\dred_i = d_i$. Otherwise, $\dred_i$ is the minimum of  $d_i$ and the sum  $\dred_{\to i} = \sum_{j \to i} \dred_j$ of the values of the reduced dimension vector at all the vertices with an outgoing edge to $i$:
$$\dred_i = \min\left(d_i, \dred_{\to i} \right).$$ 
This definition is independent of the choice of topological order.  Since $\dred_i \leq d_i$, let  $\inc_i : \R^{\dred_i} \hookrightarrow \R^{d_i}$ be the inclusion into the first $\dred_i$ coordinates; as a matrix, it has ones along the diagonal and zeros elsewhere. Let $\pi_i : \R^{d_i} \twoheadrightarrow \R^{d\red_i}$ be the projection onto the first $\dred_i$ coordinates. 

\subsection{Model compression}

In this section, we give  a model compression result for quiver neural networks with radial activations. The result is based on Algorithm \ref{alg:QR-rescaling}, whose input is a quiver neural network rescaling activations and whose output is a quiver neural network with  smaller widths (and the same underlying quiver). Specifically, if the original widths are given by the dimension vector $\bfd$, then the compressed widths are given by the reduced dimension vector $\bfdred$. The algorithm proceeds by computing successive QR decompositions according to a topological order of the vertices. The resulting upper-triangular matrix leads to the reduced weights, while the resulting orthogonal matrix provides a change-of-basis for the feature spaces and is used to update the rescaling activations to new rescaling activations. 

\begin{theorem}\label{thm:mod-comp-rescaling}
	Algorithm \ref{alg:QR-rescaling} is a lossless model compression algorithm. More precisely, the input and output $\quiver$-neural networks of Algorithm \ref{alg:QR-rescaling} have the same feedforward function. 
\end{theorem}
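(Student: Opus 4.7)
The plan is to induct on vertices in the topological order underlying Algorithm \ref{alg:QR-rescaling}. Write $(\bfd, \bfW, \boldrho)$ for the input network and $(\bfdred, \bfW', \boldrho')$ for its output. For each non-source vertex $i$, the algorithm produces an orthogonal matrix $Q_i$ from a QR factorization of the stacked (already-compressed) incoming weight matrix $M_i \in \R^{d_i \times \dred_{\to i}}$; set $Q_i = I$ at every source vertex, including the bias. The invariant I intend to propagate along the topological order is
\[
F^{\text{\rm new}}_i(x) \;=\; \pi_i \circ Q_i^\top \circ F_i(x), \qquad x \in \R^{d_\text{\rm in}}.
\]
At source and bias vertices this is immediate, since $\dred_i = d_i$, both $Q_i$ and $\pi_i$ are identity maps, and $F_i^{\text{\rm new}}$, $F_i$ agree by definition.

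For the inductive step at a non-source vertex $i$, unfold the recursive definition of $F^{\text{\rm new}}_i(x)$ and substitute the inductive hypothesis at each upstream vertex $j$ with $j \stackrel{e}{\rightarrow} i$. This rewrites the compressed pre-activation at $i$ as $M_i \cdot \bigl( \bigoplus_{j \to i} \pi_j Q_j^\top F_j(x) \bigr)$. The QR decomposition $M_i = Q_i R_i$ is designed so that the top $\dred_i$ rows of $R_i$ (suitably blocked) are exactly the new reduced weights assigned by the algorithm to the edges entering $i$, while the remaining $d_i - \dred_i$ rows of $R_i$ vanish; this is the content of the definition $\dred_i = \min(d_i, \dred_{\to i})$. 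Combining these gives that the compressed pre-activation at $i$ equals $\pi_i Q_i^\top$ applied to the original pre-activation $\sum_{j \stackrel{e}{\rightarrow} i} W_e F_j(x)$.

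To close the step, use the rescaling hypothesis. Writing $\rho_i(v) = \lambda_i(v)\, v$, the algorithm defines the new activation by $\rho'_i(w) = \lambda_i\bigl(Q_i \inc_i w\bigr)\, w$, which remains rescaling. The identity
\[
\pi_i Q_i^\top \rho_i(u) \;=\; \rho'_i\bigl(\pi_i Q_i^\top u\bigr)
\]
holds for every $u$ in the span of the first $\dred_i$ columns of $Q_i$, and the preceding paragraph shows that the original pre-activation at $i$ lies in this span. Applying the identity yields $F^{\text{\rm new}}_i(x) = \pi_i Q_i^\top F_i(x)$. At each sink $i$ one has $\dred_i = d_i$, and choosing $Q_i = I$ collapses the invariant to $F_i^{\text{\rm new}} = F_i$; assembling over the sink vertices gives equality of feedforward functions.

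The main obstacle is the case analysis for the shape of $R_i$. When $d_i \geq \dred_{\to i}$ the factorization is effectively thin and the truncation by $\pi_i$ is trivial, but when $d_i < \dred_{\to i}$ one must carefully argue that cutting $R_i$ down to its top $\dred_i$ rows preserves all information needed to recover $M_i v$, and that the displayed rescaling identity remains meaningful even though the last $d_i - \dred_i$ columns of $Q_i$ are never probed by the pre-activation. Verifying this in both regimes, using nothing beyond ray-preservation of $\rho_i$, is the principal algebraic content of the proof.
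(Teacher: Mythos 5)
Your proof is correct and follows essentially the same route as the paper's: induction along a topological order, with the complete QR factorization $M_i = Q_i \inc_i R_i$ converting the merged incoming weights into the reduced ones and the ray-preserving (rescaling) property letting the activation commute past the orthogonal change of basis (the paper isolates this commutation as Lemma \ref{lem:taui}). The only difference is packaging: the paper carries the invariant in the injective form $F_i = Q_i \circ \inc_i \circ F^{\mathrm{red}}_i$, which automatically records that $F_i(x)$ lies in the span of the first $\dred_i$ columns of $Q_i$, whereas your projected form $F^{\mathrm{new}}_i = \pi_i \circ Q_i^{\top} \circ F_i$ requires that span condition to be tracked alongside it, as you do in your inductive step.
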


\begin{algorithm}[t]
	\SetKwFunction{QRdecompCom}{QR-decomp}
	\SetKwFunction{QRdecompRed}{QR-decomp}
	\SetKwInOut{Input}{input}
	\SetKwInOut{Output}{output}
	\SetKwInOut{Initialize}{initialize}
	\DontPrintSemicolon
	
	\Input {$\quiver$-neural network $(\bfd, \bfW, \boldrho)$ with each $\rho_i$ rescaling}
	\Output {$\quiver$-neural network $(\bfdred, \bfW\red, \boldsymbol{\tau})$,   orthogonal matrices $\bfQ = (Q_i \in O(d_i))_{i \in I_\text{\rm hidden}}$ }
	\BlankLine
	$\bfQ, \bfW\red, \boldsymbol{\tau} \gets [\ ], [\ ], [\ ]$  
	\tcp*[r]{initialize output matrix lists}

 $I = \mathtt{top\_sort}(I)$  \tcp*[r]{topological order of vertices}
 
	\For{\text{\rm $i$ in $I$}    }{   
        $M_i \gets \mathtt{hstack}\left( W_e   Q_{s(e)}  \inc_{s(e)} \ : \ t(e) = i \right)$ \tcp*[r]{merge transf.\ incoming weights}

		\uIf{\text{\rm $i$ is not a source or a sink}}{
			$Q_\inx, R_\inx \gets $ \QRdecompCom{$M_i$, \ $\mathtt{mode = `complete'}$} \tcp*[r]{$M_{i} = Q_\inx  \inc_\inx  R_\inx$}
			$\tau_i \gets \pi_i \circ Q_i\inv \circ \rho_i \circ Q_i \circ \inc_i$ \tcp*[r]{update  activations}

   			Append $Q_i$ to $\bfQ$\;

		    Append $\tau_i$ to $\boldsymbol{\tau}$
		
		}
		\uElseIf{\text{\rm $i$ is sink}}{
			$R_i \gets M_i$\;
		    Append $\rho_i$ to $\boldsymbol{\tau}$ \tcp*[r]{unchanged output activations}
		}

		\For{$e$ \text{\rm such that} $t(e) = i$}{
	       $W_e\red \gets \mathtt{extract}\left( R_i, e \right)$ \tcp*[r]{extract columns corresp.\ to $e$}		
   
			Append $W\red_e$  to $\bfW\red$ \tcp*[r]{reduced weights}		
					}
	}	
 
	\KwRet $(\bfdred, \bfW\red, \boldsymbol{\tau})$, and  $\mathbf{Q}$
	\caption{QR Model Compression for Rescaling activations (\texttt{QR-Compress})}
	\BlankLine
	\hrule
	\BlankLine
	
	Notes:  The method \texttt{QR-decomp} returns the complete QR decomposition of the $d_i \times \dred_{\to i}$ matrix $M_i$ as  $Q_\inx  \inc_i  R_\inx$ where $Q_i \in O(d_\inx)$ is an orthogonal $d_i \times d_i$ matrix and  $R_i$ is upper-triangular of size $\dred_ \inx \times \dred_{\to i}$. The method $\mathtt{top\_sort}$ produces a topological order, the method  $\mathtt{hstack}$ concatenates matrices horizontally, and the method $\mathtt{extract}$ applied to $(R_i, e)$ extracts from $R_i$ the $\dred_{s(e)}$ columns corresponding to the edge $e$.

	\label{alg:QR-rescaling}
\end{algorithm}

While a full proof appears in Appendix \ref{app:model-compression}, we now give  a brief description of the ideas behind the proof. Let $i$ be a hidden vertex. Consider the subspace $\mathcal{V}_i$ of the feature space $\R^{d_i}$ spanned by the images of the linear maps $W_e$ corresponding to the incoming edges. Rescaling activations preserve subspaces, so the subspace $\mathcal{V}_i$ is sent to itself under the activation $\rho_i$. Hence, if $\mathcal{V}_i$ is a proper subspace (i.e., not all of $\R^{d_i}$), then one can ignore elements in $\R^{d_i}$ not in $\mathcal{V}_i$ and reduce $d_i$ to the dimension of $\mathcal{V}_i$. One must subsequently rewrite the matrices $W_i$ in a basis for $\mathcal{V}_i$; the QR decomposition provides a convenient way to do this. The resulting orthogonal matrix $Q_i$ is not relevant for the statement of Theorem \ref{thm:mod-comp-rescaling}, but features in the proof and in Section \ref{sec:proj-gd}. 

We remark on refinements and improvements to Algorithm \ref{alg:QR-rescaling} and Theorem \ref{thm:mod-comp-rescaling}; see Appendix \ref{app:model-compression} for more details. First, if  each  merged matrix $M_i$ is of full rank, as is practically always the case with random initialization, then no further lossless compression beyond the dimension vector $\bfdred$ is possible. In fact, there is a precise sense in which the compressed model is the {\it minimal subnetwork} of the original model with the same feedforward function (Appendix \ref{appsubsec:subnetworks}). However, in situations where $M_i$ are not assumed to be of full rank, Algorithm \ref{alg:QR-rescaling} can be improved to compress to a model with even narrower widths (Appendix \ref{appsubsec:improvement}). Finally, if one allows for changing the basis of the output space, then one can set $\dred_i$ equal to $\min(d_i, \dred_{\to i})$ for output vertices. This is due to the fact that  the common feedforward function of the compressed and original models lies in a subspace of $\R^{d_\text{\rm out}}$ of dimension 
$\sum_{i \in \text{\rm sinks}} \min(d_i, \dred_{\to i})$. 

\subsection{Radial neural networks}\label{subsec:RadNNs}
We now consider  a special class of rescaling functions with favorable equivariance properties. A {\it radial rescaling function} is a rescaling function of the form 
$$ \rho : \R^{d} \to \R^{d}, \qquad \qquad v \mapsto \frac{h(|v|)}{|v|}v$$
for a function $h : \R \to \R$. In other words, the rescaling factor $\lambda(v) = \frac{h(|v|)}{|v|}$ depends only on the norm. Radial rescaling functions commute with orthogonal transformations, and in fact they are precisely the rescaling activations that do so (Lemma \ref{lem:rad-resc-orth}). 
 Examples include:
(1) Step-ReLU, discussed above. (2) The squashing function, where $ h(r) = \frac{r^2}{r^2 + 1}$. (3) Shifted ReLU, where  $ h(r) = \text{ReLU} (r - b )$ for $r >0$ and a real number $b$.  We refer to \cite{ganev_universal_2022, weiler_general_2019} and the references therein for more examples and discussion of radial functions. 

Let   $\quiver = (I, E)$ be a neural quiver.  We say that a $\quiver$-neural network $(\bfd, \bfW, \boldrho)$ is a {\it radial} $\quiver$-neural network if each $\rho_i$ is a radial rescaling activation. 
The following result is a straightforward consequence of the fact that $\rho_i$ commutes with orthogonal transformations, and implies that  one can simplify Algorithm \ref{alg:QR-rescaling} in the case of radial rescaling activations.

\begin{restatable}{prop}{propRadial}
\label{prop:rad-NNs}
	For radial $\quiver$-neural networks, Algorithm \ref{alg:QR-rescaling}  leaves the activation functions unchanged. 
\end{restatable}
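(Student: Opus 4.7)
The plan is to unpack the activation update rule in Algorithm \ref{alg:QR-rescaling} and show it collapses, in the radial case, to the very same radial rescaling function, now interpreted on the reduced feature space $\R^{\dred_i}$ via the same scalar profile $h$. Concretely, for each hidden vertex $i$ the algorithm sets
$$\tau_i \;=\; \pi_i \circ Q_i^{-1} \circ \rho_i \circ Q_i \circ \inc_i,$$
with $Q_i \in O(d_i)$ orthogonal, $\inc_i : \R^{\dred_i} \hookrightarrow \R^{d_i}$ the inclusion into the first $\dred_i$ coordinates, and $\pi_i$ the corresponding projection. The claim to establish is that $\tau_i$ coincides with the radial rescaling function on $\R^{\dred_i}$ given by the same $h$, so in particular no update is required at the radial activations in the course of the algorithm.

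The first step is to invoke the equivariance property of radial rescaling functions, namely Lemma \ref{lem:rad-resc-orth} (the fact, already cited in the text, that radial rescaling functions are exactly the rescaling functions commuting with orthogonal transformations). Since $Q_i \in O(d_i)$ and $\rho_i$ is radial, one has $Q_i^{-1} \circ \rho_i \circ Q_i = \rho_i$, and therefore
$$\tau_i \;=\; \pi_i \circ \rho_i \circ \inc_i.$$
This already removes any dependence on the orthogonal matrix $Q_i$ produced by the QR step, which is the main content of the proposition.

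The second step is to verify that $\pi_i \circ \rho_i \circ \inc_i$ is the radial rescaling function $\R^{\dred_i} \to \R^{\dred_i}$ associated to the same profile $h$. Given $v \in \R^{\dred_i}$, the vector $\inc_i(v)$ is $v$ padded with zeros, so $|\inc_i(v)| = |v|$. Writing $\rho_i(w) = \frac{h(|w|)}{|w|}\,w$, we get
$$\rho_i(\inc_i(v)) \;=\; \tfrac{h(|v|)}{|v|}\,\inc_i(v),$$
which lies in the image of $\inc_i$ (the last $d_i - \dred_i$ coordinates remain zero). Applying $\pi_i$ returns $\frac{h(|v|)}{|v|}\,v$, the radial rescaling function on $\R^{\dred_i}$ with the same $h$. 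Thus $\tau_i = \rho_i$ in the natural sense.

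The only potential friction is the interpretation of the word ``unchanged,'' since the ambient dimension of each hidden feature space strictly decreases from $d_i$ to $\dred_i$. The right reading, which the two steps above make precise, is that the profile function $h$ defining the radial activation is preserved exactly; equivalently, one can identify $\R^{\dred_i}$ with a coordinate subspace of $\R^{d_i}$ that is invariant under $\rho_i$ (because rescaling preserves any linear subspace, and in the radial case preserves norms), and $\tau_i$ is simply the restriction of $\rho_i$ to this subspace. No genuine obstacle arises beyond checking this identification; the crux is the commutation with $Q_i$ supplied by Lemma \ref{lem:rad-resc-orth}.
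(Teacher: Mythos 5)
Your proof is correct and follows essentially the same route as the paper's: both use the commutation $Q_i^{-1}\circ\rho_i\circ Q_i=\rho_i$ from Lemma \ref{lem:rad-resc-orth} to reduce $\tau_i$ to $\pi_i\circ\rho_i\circ\inc_i$, and then observe that $\rho_i$ preserves the coordinate subspace $\R^{\dred_i}$, so $\tau_i$ is just the restriction of $\rho_i$. Your explicit norm computation in the second step only spells out what the paper leaves as a one-line remark.
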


\section{Projected gradient descent}\label{sec:proj-gd}

In practice, one typically applies a compression algorithm to a fully trained model in order to produce a smaller model that is more efficient at deployment. Some compression algorithms also accelerate training; this is the case, for example, when the compressed and original models have the same feedforward function after a step of gradient descent is applied to each. Unfortunately, for quiver neural networks with rescaling activations,  compression using Algorithm \ref{alg:QR-rescaling} before training does not yield the same result as training followed by compression (even in the basic sequential case with radial rescaling activations \cite{ganev_universal_2022}).  There is, however, an explicit mathematical relationship between optimization of the two  networks, assuming radial rescaling activations. Namely, as we discuss in this section, the loss of the compressed model after one step of gradient descent coincides with to the loss of a transformation of the original model after one step of {\it projected} gradient descent. We emphasize that the results of this section only hold for radial rescaling functions (which commute with orthogonal transformations), not general rescaling functions.  Proofs of the results of this section appear in Appendix \ref{app:proj-GD}. 

\subsection{Parameter space symmetries}\label{subsec:param-sym}
To state our results, we introduce additional notation.
Let $\quiver$ be a  quiver and $\bfd$ a dimension vector for $\quiver$. Set:
$$\Par(\quiver, \bfd) = \bigoplus_{e \in E} \R^{d_{t(e)} \times d_{s(e)}} \qquad \text{\rm and} \qquad O(\mathbf{d}^\text{\rm hid}) = \prod_{i \in I_{\text{\rm hidden}}} O(d_i)$$
to be the corresponding {\it space of trainable parameters}\footnote{A choice of trainable parameters is the same as a representation of the quiver, see Appendix \ref{app:quiver-reps}.} and {\it orthogonal symmetry group}, respectively. Note that  each tuple of weight matrices $\bfW$ in a $\quiver$-neural network belongs to $\Parqd$. 
An element of the orthogonal symmetry group group consists of the choice of an orthogonal transformation of  $\R^{d_i}$ for each hidden  vertex $i$, and results in a corresponding transformation of the weight matrices $W_e$ appearing in any element $\bfW$. To be explicit, a particular choice  $\mathbf{Q} = (Q_\inx)_{i}$ of orthogonal matrices results in the following linear transformation of weight matrices:
$$  \bfW\mapsto  \mathbf{Q} \cdot \bfW = 
\left( Q_{t(e)} \ \circ \ W_e  \ \circ \  Q_{s(e)}\inv \right)_e $$
Hence we obtain a linear action of $O(\mathbf{d}^\text{\rm hid}) $ on  on $\Parqd$. It is straightforward to show that, when using radial rescaling activations, this action leaves the feedforward function unchanged (analogous to the `positive scaling invariance' property of ReLU \cite{dinh_sharp_2017, neyshabur_path-sgd_2015, meng_g-sgd_2019}).

\subsection{Gradient descent maps}\label{subsec:pgd-set-up}
 Fix a neural quiver $\quiver = (I, E)$,  a dimension vector $\bfd$, and a tuple of radial rescaling activations $\boldrho = (\rho_i : \R^{d_i} \to \R^{d_i} )_{i \in I}$ (where, as usual, $\rho_i$ is the identity if $i$ is a source). For  any batch of training data $\{ (x_j, y_j)\} \subseteq \R^{d_\text{\rm in}} \times \R^{d_\text{\rm out}}$, we have  the loss function $\L : \Parqd  \to \R$ taking $\bfW$ to $\sum_j \mathcal C(F_{(\bfd, \bfW, \boldrho)}(x_j), y_j) $, where  $\mathcal C : \R^{d_\text{\rm out}} \times \R^{d_\text{\rm out}}\to \R$ is a cost function on the output space.  For a learning rate $\eta >0$, the corresponding  gradient descent map on $\Parqd$ is given by:
 \[ \gamma : \Parqd \to \Parqd \qquad \qquad \bfW \mapsto \bfW -  \eta  \nabla_\bfW \L.  \]
 Similarly, using the reduced dimension vector,  we have the loss function $\L_\text{\rm red} : \Parqdred  \to \R$ and gradient descent map $\gamma_{\text{\rm red}}$ on $\Parqdred$ given by $\L_\text{\rm red}(\bfX) $ $=$ $  \sum_j \mathcal C(F_{(\bfdred, \bfX, \boldrho\red)}  (x_j), y_j)$ and $\gamma_\text{\rm red} (\bfX) = \bfX - \eta \nabla_\bfX \L_\text{\rm red}$, respectively.

Next we define the projected gradient descent map. Fix a dimension vector $\bfd$ for the neural quiver $\quiver$.  
The {\it projected gradient descent} map on the parameter space $\Parqd$ is defined as:
\begin{align*}
\gamma_{\text{\rm proj}} : \Parqd & \to \Parqd      \qquad 
\bfW \mapsto  \Proj\left(  \bfW -  \eta \nabla_\bfW \L  \right)
\end{align*}
where  $\Proj : \Parqd \to \Parqd$ is the map that zeros out  all entries in the bottom left $(d_{t(e)} - \dred_{t(e)}) \times\dred_{s(e)}$ submatrix of each $W_e - \eta \nabla_{W_e} \L$. Schematically:
$$ (\bfW - \eta \nabla_{\bfW} \L)_e = W_e - \eta \nabla_{W_e} \L = \begin{bmatrix}
* & * \\
* & *
\end{bmatrix} \ \  \longmapsto  \ \  \Proj(\bfW - \eta \nabla_{\bfW} \L)_e = \begin{bmatrix}
* & * \\
0 & *
\end{bmatrix} $$
where we regard $\bfW -  \eta \nabla_\bfW \L$ as a tuple $(W_e - \eta \nabla_{W_e} \L)_{e \in E} \in \Parqd$.
Hence, while all entries of each matrix  $W_e$ in the tuple $\bfW$ contribute to the computation of the gradient $\nabla_\bfW \L$, only those not in the bottom left submatrix get updated under the projected gradient descent map $ \gamma_\text{\rm proj}$, while those in the bottom left submatrix are zeroed out. 

We are now ready to state the relationship between gradient descent on the compressed model and projected gradient descent on the original model. 
Let $\bfW$ be a tuple in $\Parqd$. Applying Algorithm \ref{alg:QR-rescaling}, let $\bfW\red \in \Parqdred$ be the reduced parameters and $\bfQ \in O(\bfdhid)$ the orthogonal symmetries. Set $\bfT = \bfQ\inv \cdot \bfW$ to be the transformed parameters, so that  $\bfW = \bfQ \cdot \bfT$.

\begin{restatable}{theorem}{thmProjGD}
\label{thm:qt-dim-red}
	Let $\bfW \in \Parqd$, and let $\bfW\red, \bfQ, \bfT$ be as above. For any $k \geq 0$, we  have\footnote{\label{footnote:inclusion}More precisely, the second equality  is  $\gamma_\text{\rm proj}^{k}( \bfT ) =   \iota(	\gamma_\text{\rm red}^k (\bfW\red) ) + \bfT - \iota(\bfW\red)$ where $\iota : \Parqdred \hookrightarrow \Parqd$ is the natural inclusion. See Appendix \ref{app:proj-GD}.}:
	$$\gamma^k( \bfW) = \bfQ \cdot \gamma^k(\bfT) \qquad \qquad  \gamma_\text{\rm proj}^k (\bfT) =   \gamma_{\text{\rm red}}^k (\bfW\red)   + (\bfT - \bfW\red).$$
\end{restatable}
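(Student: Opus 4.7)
The plan is to exploit two features of radial rescaling activations: their commutativity with orthogonal transformations (so $\L$ is $O(\bfdhid)$-invariant) and their preservation of coordinate subspaces (so the block form of $\bfT$ forces the dynamics to decouple into a reduced-model update on a top-left block plus a frozen remainder).

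For the first identity, I would first establish that $\L : \Parqd \to \R$ is invariant under the linear action of $O(\bfdhid)$. Since each $\rho_i$ is radial, an induction along a topological order shows that the feedforward functions of $(\bfd, \bfX, \boldrho)$ and $(\bfd, \bfQ \cdot \bfX, \boldrho)$ coincide for any $\bfX \in \Parqd$ and any $\bfQ \in O(\bfdhid)$, hence $\L(\bfQ \cdot \bfX) = \L(\bfX)$. The action $\bfX \mapsto \bfQ \cdot \bfX$ is an isometry of $\Parqd$ for the Frobenius inner product (the $Q_i$ are orthogonal), so invariance of $\L$ gives equivariance of the gradient and hence $\gamma(\bfQ \cdot \bfX) = \bfQ \cdot \gamma(\bfX)$. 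Iterating with $\bfX = \bfT$ yields $\gamma^k(\bfW) = \gamma^k(\bfQ \cdot \bfT) = \bfQ \cdot \gamma^k(\bfT)$.

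For the second identity, I would first unpack the block structure of $\bfT = \bfQ\inv \cdot \bfW$. Since the first $\dred_{s(e)}$ columns of $T_e = Q_{t(e)}\inv W_e Q_{s(e)}$ form a block of $Q_{t(e)}\inv M_{t(e)} = \inc_{t(e)} R_{t(e)}$, one reads off
$$T_e = \begin{pmatrix} W_e\red & B_e \\ 0 & C_e \end{pmatrix},$$
with rows partitioned as $\dred_{t(e)} + (d_{t(e)} - \dred_{t(e)})$ and columns as $\dred_{s(e)} + (d_{s(e)} - \dred_{s(e)})$; the right-column blocks $B_e, C_e$ degenerate when $s(e)$ is a source, and the bottom row vanishes when $t(e)$ is a sink. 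Thus $\iota(\bfW\red)$ retains only the top-left blocks, while $\bfT - \iota(\bfW\red)$ consists entirely of $(B_e, C_e)$. Next, by induction along a topological order and using Proposition \ref{prop:rad-NNs}, I would show that at any parameter of the form $\iota(\bfX) + (\bfT - \iota(\bfW\red))$ with $\bfX \in \Parqdred$, the feedforward value at every hidden vertex $i$ lies in $\iota_i(\R^{\dred_i})$, so the right-column blocks never contribute and the loss at such a point equals $\L_{\text{\rm red}}(\bfX)$.

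The second identity then follows by induction on $k$. Setting $\bfS_k := \iota(\gamma_{\text{\rm red}}^k(\bfW\red)) + (\bfT - \iota(\bfW\red))$, the preceding step gives $\L(\bfS_k) = \L_{\text{\rm red}}(\gamma_{\text{\rm red}}^k(\bfW\red))$ and shows the partial derivatives of $\L$ at $\bfS_k$ vanish on $B_e, C_e$ while agreeing with those of $\L_{\text{\rm red}}$ on the top-left blocks. One gradient descent step thus updates the top-left blocks by the reduced-model gradient, leaves $B_e, C_e$ fixed, and may introduce nonzero entries in the bottom-left blocks, which $\Proj$ then zeros out to produce $\bfS_{k+1}$. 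The main obstacle is the preservation of this block pattern under the dynamics: one must verify that whenever $\bfS_k$ has the asserted form the gradient contributions to the right columns truly vanish (relying on radiality via Proposition \ref{prop:rad-NNs}) and that the zeroing performed by $\Proj$ is exactly compatible with the reduced update on the top-left. Once these are in place, the induction closes cleanly.
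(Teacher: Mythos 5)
Your proposal is correct and follows essentially the same route as the paper: the first identity via orthogonal invariance of the loss and equivariance of the gradient, and the second via the block structure of $\bfT$, the observation that the loss along parameters of the form $\iota(\bfX) + (\bfT - \iota(\bfW\red))$ factors through the reduced loss (so the gradient vanishes on the right-column blocks and agrees with $\nabla\L_{\text{\rm red}}$ on the top-left blocks), and induction on $k$. Your concrete block-matrix bookkeeping is exactly the content of the paper's interpolating space $\Parintqd$, the maps $\iota_1, \iota_2, q_1, q_2$, and Lemma \ref{lem:repint}.
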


\begin{figure}
	\centering
	     \includegraphics[width=0.8\textwidth]{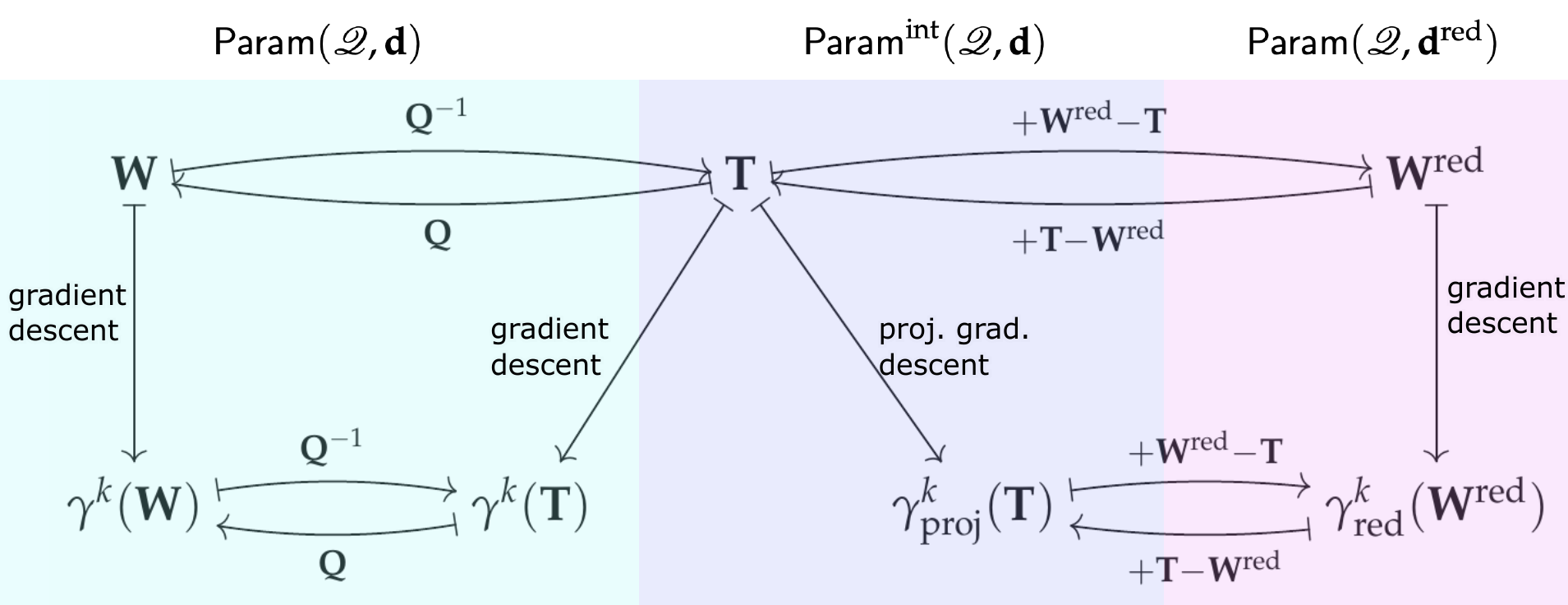}
	    \caption{Schematic summary of Theorem \ref{thm:qt-dim-red}. The left horizontal maps indicate the action of $\bfQ^{\pm 1}$, the right horizontal maps indicate translation by $\pm (\bfT - \bfW\red)$, the and the vertical maps are various versions of gradient descent. The colored regions indicate the (smallest) vector space to which the various  tuples naturally belong; the interpolating subspace $\Parintqd$ is defined in Appendix \ref{app:proj-GD}. }
	    \label{fig:proj-GD}
\end{figure}

Hence, the gradient descent optimization of the original model $\bfW$ and the transformed model $\bfT$ are equivalent, since at any stage they are related by the orthogonal change-of-basis action of $\bfQ$. Meanwhile, optimization of the transformed model $\bfT$ via projected gradient descent is equivalent to the gradient descent optimization of the compressed model $\bfW\red$, since at any stage one can add or subtract the quantity $\bfT - \bfW\red$ to obtain one from the other. Note that, for  $k \geq 0$, we have the $k$-fold composition $\gamma^k = \gamma \circ \gamma \circ \cdots \circ \gamma$ and similarly for  $\gamma_{\text{\rm red}}$ and $\gamma_{\text{\rm proj}}$. We summarize Theorem \ref{thm:qt-dim-red} in Figure \ref{fig:proj-GD} and give  a proof in Appendix \ref{app:proj-GD}. 

\section{Experiments}\label{sec:experiments}

We provide a software package making it easy to build quiver neural networks by simply specifying the quiver $\quiver$, the dimension vector, and the activation functions.  We also implement the model compression algorithm \ref{alg:QR-rescaling}.  As a check for our theoretical results we provide empirical verification of the above theorems.  
For simplicity, our experiments only consider the case of radial rescaling activations. 

\paragraph{\bf Empirical verification of Theorem \ref{thm:mod-comp-rescaling}}

We consider the quivers displayed in Figure \ref{fig:exp-quiver}. We write dimension vectors as a list according to alphabetic order of the vertices (which is also a topological order), that is, as $(d_a, d_b, d_c, \dots)$.
We select the following dimension vectors: $(2,4,8,2)$, $(1,2,8,2,6)$, and $(2, 4, 4, 8, 2)$, respectively. The corresponding reduced dimension vectors are: $(2,3,6,2)$, $(1,2,4,2,6)$, and $(2,3,3,7,2)$. The weights, biases, inputs, and labels are randomly assigned from $\mathrm{Uniform}([0,1))$.  Over 10 trials, the feedforward functions of the original and compressed models agree up to machine precision: $|F_\text{\rm orig} - F_\text{\rm compressed}| < 10^{-6}$. 

\paragraph{\bf Empirical verification of Theorem \ref{thm:qt-dim-red}}

We verify the theorem for the three quivers appearing in Figure \ref{fig:exp-quiver}. Over 10 trials, one step of gradient descent of the compressed model matches one step of projected gradient descent of the original model up to machine precision:
$$|\gamma(\bfW) - \bfQ \cdot \gamma(\bfT) | < 10^{-5} \qquad |\gamma_\text{\rm proj}(\bfT) + \bfW\red - \gamma_\text{\rm red}(\bfW\red) - \bfT | < 10^{-6}$$

\begin{figure}[t]
 \resizebox{0.3\textwidth}{!}{	\xymatrix{ \\ \stackrel{a}{\bullet}  \ar[r] \ar@/^1pc/[rr]  & \stackrel{b}{\bullet}  \ar[r] & \stackrel{c}{\bullet}  \ar[r]    & \stackrel{d}{\bullet}   \\ 
			&  &    {\color{\biasvercol} \bullet}  \ar[ru] \ar[u] \ar[lu]    &  &    }} 
   \qquad 
   \resizebox{0.15\textwidth}{!}{\xymatrix{
\stackrel{a}{\bullet} \ar[rd] & & \stackrel{d}{\bullet}  \\
& \stackrel{c}{\bullet}  \ar[ru] \ar[rd] & \\
\stackrel{b}{\bullet}  \ar[ru] & & \stackrel{e}{\bullet}  \\
&  {\color{\biasvercol} \bullet} \ar[uu] \ar[uuur] \ar[ur] 
    }} \qquad   
   \resizebox{0.21\textwidth}{!}{\xymatrix{
  & \stackrel{b}{\bullet} \ar[dr] &  \\
\stackrel{a}{\bullet}  \ar[ru] \ar[rd] & &  \stackrel{d}{\bullet} \ar[r] & \stackrel{e}{\bullet}  \\
& \stackrel{c}{\bullet}  \ar[ru] & & \\
&  {\color{\biasvercol} \bullet} \ar[u] \ar@/^1pc/[uuu]  \ar[uur] \ar[uurr]
    }}
\caption{Quivers used in the experiments of Section \ref{sec:experiments}. The alphabetical order of the vertices is a topological order. As usual, the bottom vertex is the bias vertex. }
\label{fig:exp-quiver}
\end{figure}
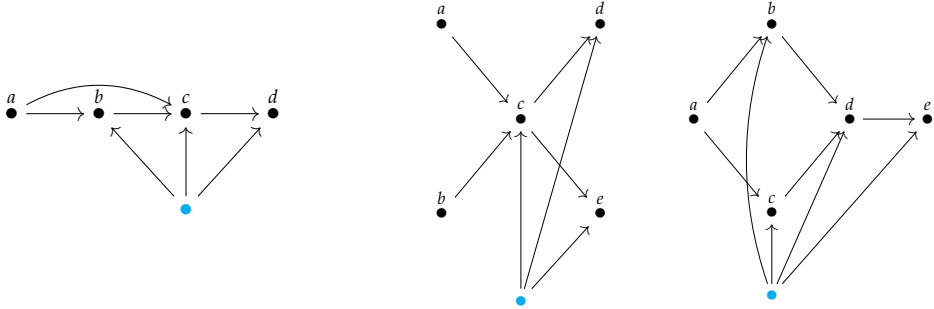

\section{Conclusions and future work}\label{sec:conclusion}

In this paper, we propose a new formalism for the study of neural network connectivity and data flows via the notion of a quiver neural network.  While many of our constructions originate in the somewhat abstract mathematical domain of quiver representation theory, we argue that there are practical advantages to quiver neural networks. Namely, our formalism captures connectivity architectures in the simplest possible way, emphasizes parameter space symmetries, leads to an explicit lossless model compression algorithm, and links the compression to gradient descent optimization. 

Moreover, our work opens a number of directions for future work. First, a modification of our techniques may lead to  a generalization of Algorithm \ref{alg:QR-rescaling} beyond rescaling activations to certain other families of non-pointwise activations. Second, we expect our framework to extend to encapsulate equivariant and convolutional neural networks. Third, our constructions are compatible with various regularization techniques (particularly $L_2$-regularization); we have not yet explored the practical consequences of this compatibility. Fourth, future work would include an evaluation of empirical performance on real-world data sets. Finally, a natural next theoretical step would be to explore relations with quiver varieties and reformulate optimization problems geometrically, with the aim to obtain better convexity properties of the appropriate version of the loss function.

\section*{Acknowledgements}

We are grateful for valuable feedback and suggestions from Avraham Aizenbud, Rustam Antia, Marco Antonio Armenta, Niklas Smedemark-Margulies, Jan-Willem van de Meent, Twan van Laarhoven, and Rose Yu.  This work was partially funded by the NWO under the CORTEX project (NWA.1160.18.316) and NSF grant \#2134178. Robin Walters is supported by the Roux Institute and the Harold Alfond Foundation.

\bibliography{main.bib}

\vfill
\pagebreak

\appendix
\section{Model compression}\label{app:model-compression}

In this appendix, we discuss model compression in more detail. We begin with a proof of Theorem \ref{thm:mod-comp-rescaling}, elaborate on an improvement and an alternative to Algorithm \ref{alg:QR-rescaling}, define subnetworks of quiver neural networks, and include results about radial neural networks. 

\subsection{Proof of Theorem \ref{thm:mod-comp-rescaling}}
We adopt the notation of Section \ref{sec:rescaling-MC} and Algorithm \ref{alg:QR-rescaling}. For each vertex $i$, let $\lambda_i : \R^{d_i} \to \R$  be the rescaling factor for the rescaling activation $\rho_i$. 

\begin{lemma}\label{lem:taui}
    For any vertex $i$:
\begin{enumerate}
        \item The function $\tau_i$ is a rescaling activation with rescaling factor equal to $\lambda_i \circ Q_i \circ \inc_i$.  
        \item The following identity holds: $Q_i \circ \inc_i \circ \tau_i = \rho_i \circ Q_i \circ \inc_i$.
\end{enumerate}
\end{lemma}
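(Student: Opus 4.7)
The plan is to prove part (1) by direct computation from the definition of $\tau_i$, then derive part (2) as an immediate consequence by rearranging the formula. Both parts hinge on the same two algebraic identities: $Q_i^{-1} Q_i = \id_{\R^{d_i}}$ (since $Q_i$ is orthogonal) and $\pi_i \circ \inc_i = \id_{\R^{\dred_i}}$ (since $\inc_i$ embeds into the first $\dred_i$ coordinates and $\pi_i$ projects onto them).

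For part (1), I would fix $v \in \R^{\dred_i}$ and expand $\tau_i(v) = \pi_i \circ Q_i^{-1} \circ \rho_i \circ Q_i \circ \inc_i(v)$. Using the rescaling hypothesis $\rho_i(w) = \lambda_i(w) \cdot w$ with $w = Q_i \inc_i v$, the scalar $\lambda_i(Q_i \inc_i v)$ factors out of the remaining linear maps, so
\[
\tau_i(v) \;=\; \lambda_i(Q_i \inc_i v) \cdot (\pi_i \circ Q_i^{-1} \circ Q_i \circ \inc_i)(v) \;=\; \lambda_i(Q_i \inc_i v) \cdot v,
\]
where the final equality uses the two identities above. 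This shows $\tau_i$ is rescaling with factor $\lambda_i \circ Q_i \circ \inc_i$.

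For part (2), I would apply $Q_i \circ \inc_i$ to the conclusion of part (1). On the one hand, $Q_i \inc_i \tau_i(v) = \lambda_i(Q_i \inc_i v) \cdot Q_i \inc_i v$. On the other hand, the rescaling property gives $\rho_i(Q_i \inc_i v) = \lambda_i(Q_i \inc_i v) \cdot Q_i \inc_i v$. The two right-hand sides agree, yielding $Q_i \circ \inc_i \circ \tau_i = \rho_i \circ Q_i \circ \inc_i$. Conceptually, this is the statement that the $\dred_i$-dimensional subspace $\operatorname{im}(Q_i \inc_i) \subseteq \R^{d_i}$ is invariant under the rescaling activation $\rho_i$, and that $\tau_i$ is the restriction of $\rho_i$ to this subspace expressed in the coordinates given by the first $\dred_i$ columns of $Q_i$.

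I do not anticipate a genuine obstacle here, since both statements are purely formal manipulations once the rescaling property is applied; the only point requiring care is to keep track of which identity map lives on which space ($\R^{d_i}$ versus $\R^{\dred_i}$) when cancelling $Q_i^{-1} Q_i$ and $\pi_i \inc_i$.
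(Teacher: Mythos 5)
Your proposal is correct and follows essentially the same argument as the paper: part (1) by expanding $\tau_i$, factoring out the scalar $\lambda_i(Q_i\inc_i x)$, and cancelling $Q_i^{-1}Q_i$ and $\pi_i\inc_i$; part (2) by applying the linear map $Q_i\circ\inc_i$ to the formula from part (1) and comparing with the rescaling property of $\rho_i$. The concluding remark about the invariance of $\operatorname{im}(Q_i\inc_i)$ is a nice conceptual gloss but the computations match the paper's proof step for step.
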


\begin{proof}
 Let $x \in \R^{\dred_i}$. To show the first claim, we compute:
\begin{align*}
\tau_i(x) & = \pi_i \circ Q_i\inv \circ \rho_i \circ Q_i \circ \inc_i(x) = \pi_i \circ Q_i\inv \left( \lambda_i(Q_i \circ \inc_i (x)) Q_i \circ 
\inc_i (x) \right) \\ 
&= \pi_i \left( \lambda_i(Q_i \circ \inc_i (x)) \inc_i(x) \right) = \lambda_i(Q_i \circ \inc_i(x)) x  
\end{align*}
where we use the fact that $Q_i$ is a linear map, and that the composition $\pi_i \circ \inc_i$ is the identity on $\R^{\dred_i}$. This proves the first claim. 
For the second claim, we compute:
\begin{align*}
Q_i \circ \inc_i \circ \tau_i (x)  & = Q_i \circ \inc_i  \left(  \lambda_i(Q_i \circ \inc_i(x)) x \right) = \lambda_i(Q_i \circ \inc_i(x))\left( Q_i \circ \inc_i (x) \right) =  \rho_i \circ Q_i \circ \inc_i (x)
\end{align*}
where we use the fact that $Q_i$ and $\inc_i$ are linear maps.
\end{proof}

We require some notation for the remainder of the proof. Let $F$ be the feedforward function of the input network $(\bfd, \bfW, \boldrho)$ and $F\red$ the feedforward function of the compressed network $(\bfdred, \bfW\red, \boldsymbol{\tau})$ produced by Algorithm \ref{alg:QR-rescaling}. Correspondingly, we have the partial feedforward functions $F_i$ and $F\red_i$ for every vertex $i$. Set $\dred_{\to i} = \sum_{e\in t\inv(i)} \dred_{s(e)}$ to be the sum of the reduced dimension vector values $\dred_j$ for vertices $j$ with an outgoing edge to vertex $i$.  Enumerate the incoming edges as $e_1, \dots, e_{|t\inv(i)|}$. 
Let  $M_i$ be as in the algorithm, so that:
$$M_i = \begin{bmatrix}
W_{e_1} \circ Q_{s(e_1)}  \circ \inc_{s(e_1)} & \cdots & W_{e_{|I|}} \circ Q_{s(e_{|I|})}  \circ \inc_{s(e_{|I|})} 
\end{bmatrix} \in \R^{d_i \times d_{\to i}}$$ 
For $x \in \R^{d_\text{\rm in}} $, set
$$F\red_{\to i}(x) = \begin{bmatrix}
F\red_{s(e_1)}(x) \\
\vdots \\
F\red_{s(e_{|t\inv(i)|})(x)}
\end{bmatrix} \in \R^{d_{\to i}}.$$
Next, we show that:
\begin{equation}\label{eqn:F=QincFred}
F_{i} =  Q_i \circ \inc_i \circ F\red_{i}
\end{equation}
for all vertices $i$, where $Q_i =  \id_{d_i}$ if $i$ is a source or sink. We proceed by induction. The base case is when $i$ is a source, and is easy. For the induction step, we take $x \in \R^{d_\text{\rm in}}$ and compute:
\begin{align*}
F_{i}(x)  &= \rho_i \left( \sum_{e \in t\inv(i)} W_e \circ F_{s(e)}(x)  \right)  =\rho_i \left( \sum_{e \in t\inv(i)}  W_e \circ Q_{s(e)}  \circ \inc_{s(e)}\circ  F\red_{s(e)}(x)  \right)  = \rho_i \circ  M_i \circ  F\red_{\to i}(x) \\ 
& = \rho_i \circ Q_i \circ \inc_i \circ R_i \circ  F\red_{\to i}(x)  = Q_i \circ \inc_i \circ \tau_i \circ R_i  \circ  F\red_{\to i}(x)   \\ 
&=  Q_i \circ \inc_i  \circ \tau_i    \left( \sum_{e \in t\inv(i)}  W\red_e  \circ F\red_{s(e)}(x) \right)=  Q_i \circ \inc_i \circ F\red_{i}(x)
\end{align*}
where the first equality follows from 	the definition of $F_i$; the second from the induction hypothesis; the third from  the definitions of $M_i$ and $F\red_{\to i}(x)$; the fourth from the definitions of $Q_i$ and $R_i$; the fifth from the Lemma \ref{lem:taui};  the sixth from the definition of $R_e$ and $F\red_{\to i}(x)$; and the last from the  definition of $F\red_i$. This establishes Equation \ref{eqn:F=QincFred}. The theorem now follows from the definition of the feedforward function, and the fact that $Q_i = \inc_i = \id_{d_i}$ if $i$ is a sink.

\subsection{Improvement to Algorithm \ref{alg:QR-rescaling}}\label{appsubsec:improvement}

As mentioned in Section \ref{sec:rescaling-MC}, our model compression algorithm can be improved by considering the ranks of the matrices $M_i$. We give this improvement in  Algorithm \ref{alg:QR-minimal}, displayed at the end of this appendix. For clarity, we describe the algorithm in more detail as follows. Set $k_i = d_i$ for each source and sink vertex $i$. Fix a topological order of the vertices, and, for each vertex that is neither a source nor a sink, do:
\begin{enumerate}
    \item  Define $\widetilde{M}_i$ using a merging procedure as before, except use the inclusion map $\inc_{k_j,d_j} :\R^{k_i} \hookrightarrow \R^{d_i}$ into the first $k_i$ coordinates instead of $\inc_{\dred_j, d_j}$. So  $\widetilde{M}_i$ is of size $d_i \times k_{\to i}$. We abbreviate $\inc_{k_j,d_j} $ by $\widetilde{\inc}_i$.
    
    \item  Let $k_i$ be the rank of $\widetilde{M}_i$. Observe that $k_i \leq \min(d_i, k_{\to i})$, so $k_i \leq \dred_i$. 
    
    \item  Set $P_i \in \R^{k_{\to i} \times k_{\to i}}$ to be a permutation matrix  such that the first $k_i$ columns of $\widetilde{M}_i$ are linearly independent. 
    
    \item  Let $\widetilde{Q}_i \widetilde{\inc}_{i} \widetilde{R}_i$ be the complete QR decomposition of $\widetilde{M}_i P_i$. Note that the choice of permutation $P_i$ implies that  $\widetilde{R}_i$ is full rank.
    
    \item Set $\widetilde{W}\red_{\to i}  = \widetilde{R}_i P_i\inv$, which is a full-rank matrix of size $k_i \times k_{\to i}$. Since $k_i \leq k_{\to i}$, we see that in fact $\widetilde{W}_{\to i}\red$ is surjective. 
    
    \item Extract the matrices $\widetilde{W}_e\red$ from $\widetilde{W}_{\to i}\red$ as in the original algorithm. 
    
    \item Update the activation $\rho_i$ to $\widetilde{\tau}_i = \widetilde{\pi}_i \circ \widetilde{Q}_i\inv \circ \rho_i \circ \widetilde{Q}_i \circ \widetilde{\inc}_i$ where $\widetilde{\pi}_i : \R^{d_i} \to \R^{k_i}$ is the projection onto the frist $k_i$ coordinates. 

\end{enumerate}
For sink vertices, one updates the weight matrix $W_e$ on each incoming edge to $W_e \circ \widetilde{Q}_{s(e)} \circ \widetilde{\inc}_{s(e)}$.
Using this version of the algorithm, the compressed network has dimension vector $\bfk = (k_i)$.  The methods {\tt hstack}, {\tt rank}, {\tt permutation}, and {\tt extract} of Algorithm \ref{alg:QR-minimal} reflect the procedures described in steps (1), (2), (3), and (6) above, respectively. 

Observe that Algorithm \ref{alg:QR-minimal} provides an actual improvement  only if some of the matrices $\widetilde{M}_i$ matrices are not of full rank (otherwise $k_i = \dred_i$ for all vertices $i$), which is a situation uncommon in practical applications. One can also replace each $\widetilde{M}_i$ by a version with singular values above a given threshold. In this case, the feedforward function of the compressed model would differ from that of the original model, so the compression would not be lossless.

Examination of Algorithm \ref{alg:QR-minimal} reveals that the image of the common feedforward functions of the compressed and original models lies in a subspace of $\R^{k_\text{\rm out}}  = \R^{d_\text{\rm out}}$ of dimension $\sum_{i \in \text{\rm sinks}} \mathrm{rank}(\widetilde{M}_i)$, where $\widetilde{M}_i$ is defined in the same way for sinks as it is for non-sinks. Hence, if desired, the basis of the output space can be changed to make $k_i$ equal to $\mathrm{rank}(\widetilde{M}_i)$ for output vertices. Similar remarks hold for Algorithm \ref{alg:QR-rescaling}, where $\dred_i = \min(d_i, \dred_{\to i})$ for output vertices if one allows changes of basis for the output space. 

\subsection{Alternative algorithm}\label{appsubsec:alternative}

A key tool in Algorithm \ref{alg:QR-rescaling} is the use of the QR decomposition to effectively change the basis of the feature spaces $\R^{d_i}$ at the hidden vertices. It is possible to preform this change-of-basis differently, as exhibited in Algorithm \ref{alg:dim-red-rescaling} (displayed at the end of this appendix). For clarity, we describe the algorithm in more detail as follows. For every source vertex, set $B_i$ to be the identity $d_i \times d_i$ matrix and set $k_i = d_i$. For each vertex that is neither a source nor a sink, do:
\begin{enumerate}
    \item Form the matrix $\hat B_i$ to be the $d_i \times k_{\to i}$ matrix formed by horizontally concatenating the matrices $W_e B_{s(e)}$ for all incoming edges to $i$. 
    \item Proceeding from left to write, check if each column of $\hat B_i$ is in the span of the preceeding columns, and if so, remove it. 
    \item Set $k_i$ to be the number of columns of $B_i$. (Equivalently, $k_i$ is  the rank of $\hat B_i$.)
    \item Observe that $B_i$ is injective, so there exists a matrix $C_i$ such that $C_i B_i$ is the identity $k_i \times k_i$ matrix. We call $C_i$ a left inverse of $B_i$, and note that it is not unique in general. An adequate choice can be easily computed. 
    \item Set $\tilde{\tau}_i$ to be $C_i \circ \rho_i \circ B_i$. 
    \item For each incoming edge $e$ to $i$, set $\tilde{W}_e\red$ to be the matrix product $C_i W_e B_{s(e)}$, which is a matrix of size $k_i \times k_{s(e)}$. 
 \end{enumerate}
For sink vertices, one sets $k_i = d_i$ and updates the weight matrix $W_e$ on each incoming edge to $W_e \circ B_{s(e)}$. The methods {\tt hstack}, {\tt full\_rank}, {\tt num\_col}, and {\tt left\_inverse} of Algorithm \ref{alg:dim-red-rescaling} reflect the procedures described in steps (1), (2), (3), and (4) above, respectively. 
It is easy to verify that, for any vertex $i$ and any edge $j \stackrel{e}{\rightarrow} i$, we have $B_i \circ \tau_i = \rho_i \circ B_i$ and $B_i \circ W_e\red = W_e \circ B_j$, 
where we use the notation of Algorithm \ref{alg:dim-red-rescaling}. Furthermore, we recover the  model compression of Algorithm \ref{alg:dim-red-rescaling} from Algorithm \ref{alg:QR-minimal} by setting  $B_i := \tilde{Q}_i \circ \tilde{\inc}_{i}$, $C_i = \tilde{\pi}_{i} \circ \tilde{Q}_i\inv$, and $\tau_i = \tilde{\pi}_{i} \circ  \tilde{Q}_i\inv \circ \rho_i  \circ \tilde{Q}_i \circ \tilde{\inc}_{i}$. 

\subsection{Subnetworks of quiver neural networks}\label{appsubsec:subnetworks}

We now introduce the notation of a subnetwork of a $\quiver$-neural network, and prove that the feedforward function of a subnetwork is intertwined with the feedforward function of the full network. Algorithms \ref{alg:QR-rescaling}, \ref{alg:QR-minimal}, and \ref{alg:dim-red-rescaling} each produce a subnetwork of the original input $\quiver$-neural network. In the case of Algorithms \ref{alg:QR-minimal} and \ref{alg:dim-red-rescaling}, the subnetworks are minimal in certain precise sense that we explain below. 

Let $\quiver$ be a neural quiver. A {\it subnetwork} of a $\quiver$-neural network $(\bfd, \bfW, \boldrho)$ consists of a $\quiver$-neural network $(\bfk, \bfV, \boldsymbol{\tau})$ together with an injective linear map $\alpha_i : \R^{k_i} \hookrightarrow \R^{d_i}$ for each vertex such that:
\[ W_e \circ \alpha_{s(e)} = V_e \circ \alpha_{t(e)} \qquad \mathrm{and} \qquad \rho_i \circ \alpha_i = \alpha_i \circ \tau_i\]
for each vertex $i$ and each edge $e$, and $\alpha_{i_\text{\rm bias}}$ is the identity for the bias vertex. We group the maps $\alpha_i$ into a tuple $\boldsymbol{\alpha} = (\alpha_i)$ and write $(\bfk, \bfV, \boldsymbol{\tau}) \stackrel{\boldsymbol{\alpha}}{\hookrightarrow} (\bfd, \bfW, \boldsymbol{\rho})$.  We see that $k_i \leq d_i$ for each vertex $i$.
The maps $\alpha_i$ define maps $\alpha_\text{\rm in} : \R^{k_\text{\rm in}} \hookrightarrow \R^{d_{\text{\rm in}}}$ and $\alpha_\text{\rm out} : \R^{k_\text{\rm out}} \hookrightarrow \R^{d_{\text{\rm out}}}$. 

\begin{prop}\label{prop:feedforwards}
    Let $(\bfk, \bfV, \boldsymbol{\tau}) \stackrel{\boldsymbol{\alpha}}{\hookrightarrow} (\bfd, \bfW, \boldsymbol{\rho})$ be a subnetwork, and let $\tilde{F}$ and $F$ be the respective feedforward functions. Then:
    \[ F \circ \alpha_\text{\rm in} = \alpha_\text{\rm out} \circ \tilde{F} \]
\end{prop}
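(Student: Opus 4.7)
The plan is to prove the stronger vertex-wise statement $F_i \circ \alpha_\text{\rm in} = \alpha_i \circ \tilde F_i$ for every $i \in I$ by induction on a chosen topological order of the vertices, where $F_i$ and $\tilde F_i$ are the partial feedforward functions defined in Section \ref{subsec:feedforward}. The proposition then follows at once by restricting the claim to output vertices and assembling them into a direct sum, since $\alpha_\text{\rm out}$ is by construction the direct sum of the $\alpha_i$ over sink vertices.

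For the base of the induction I would handle the source vertices separately. For an input vertex $i$, both $F_i$ and $\tilde F_i$ are the projections onto the $i$-th summand of their respective input spaces, so the required identity is immediate from the fact that $\alpha_\text{\rm in}$ is tautologically the direct sum of the maps $\alpha_j$ over input vertices $j$. For the bias vertex, both partial feedforwards are the constant function $1$, and the identity holds because $\alpha_{i_\text{\rm bias}}$ is the identity on $\R$ by hypothesis.

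For the inductive step at a non-source vertex $i$, I would unfold the recursive definition $F_i(y) = \rho_i \bigl(\sum_{e:j\to i} W_e \circ F_j(y)\bigr)$ with $y = \alpha_\text{\rm in}(x)$, apply the inductive hypothesis to each incoming vertex $j$ to rewrite $F_j \circ \alpha_\text{\rm in}$ as $\alpha_j \circ \tilde F_j$, then use the weight intertwining relation $W_e \circ \alpha_{s(e)} = \alpha_{t(e)} \circ V_e$ (the version that is dimensionally consistent; the analogous relation stated in the definition should be read in this direction) together with linearity of $\alpha_i$ to pull $\alpha_i$ outside the sum, and finally invoke the activation intertwining $\rho_i \circ \alpha_i = \alpha_i \circ \tau_i$ to land on $\alpha_i \circ \tilde F_i(x)$, as desired.

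There is no real obstacle here beyond careful bookkeeping: the proposition is essentially a formal consequence of the definition of a subnetwork, and the induction runs mechanically because the two intertwining conditions built into that definition are exactly what is needed to commute $\boldsymbol{\alpha}$ with the recursive step. The only point worth flagging is the direction of the weight intertwining relation, which is forced by the domains and codomains of $\alpha_{s(e)}$, $\alpha_{t(e)}$, $W_e$ and $V_e$.
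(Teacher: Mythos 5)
Your proposal is correct and follows essentially the same route as the paper's own proof: induction over a topological order on the strengthened vertex-wise claim $F_i \circ \alpha_{\mathrm{in}} = \alpha_i \circ \tilde F_i$, using the two intertwining relations and linearity of $\alpha_i$ in the inductive step. Your observation about the direction of the weight intertwining relation is well taken --- as written in the definition the composition $V_e \circ \alpha_{t(e)}$ does not typecheck, and the dimensionally consistent reading $W_e \circ \alpha_{s(e)} = \alpha_{t(e)} \circ V_e$ is indeed the one the paper's computation uses.
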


\begin{proof}[Sketch of proof] It suffices to show that the partial feedforward functions satisfy $F_i \circ \alpha_\text{\rm in} = \alpha_i \circ \tilde{F_i}$
for all vertices $i$. To this end, we fix a topological order and proceed by induction. The base step is straightforward (using the fact that $\alpha_{i_\text{\rm bias}} = \id_\R$). The key computation in the induction step is:
\begin{align*}
F_{i} \circ \alpha_\text{\rm in}  &= \rho_i \left( \sum_{e \in t\inv(i)} W_e \circ F_{s(e)} \circ \alpha_\text{\rm in}  \right)  =\rho_i \left( \sum_{e \in t\inv(i)}  W_e \circ \alpha_i \circ   \tilde{F}_{s(e)}  \right)  = \rho_i \left( \sum_{e \in t\inv(i)} \alpha_i \circ V_{s(e)} \circ \tilde{F}_{s(e)} \right) \\ & = \rho_i \circ \alpha_i \left( \sum_{e \in t\inv(i)} V_{s(e)} \circ \tilde{F}_{s(e)} \right)   =  \alpha_i \circ \tau_i \left( \sum_{e \in t\inv(i)} V_{s(e)} \circ \tilde{F}_{s(e)} \right) =  \alpha_i  \circ \tilde{F}_{i}
\end{align*}
\end{proof}

\begin{prop}\label{prop:subnet}
    The outputs of Algorithms \ref{alg:QR-rescaling}, \ref{alg:QR-minimal}, and \ref{alg:dim-red-rescaling} each produce a subnetwork of the original input $\quiver$-neural network  with rescaling activations.
\end{prop}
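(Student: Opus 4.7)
The plan is, for each of the three algorithms, to exhibit a natural family of injective linear maps $\boldsymbol{\alpha} = (\alpha_i)_{i \in I}$ realizing the output as a subnetwork, and then to verify the two intertwining conditions directly from the constructions. No new ideas are needed beyond what is already present: the necessary algebraic identities have essentially been established in Lemma \ref{lem:taui} (for Algorithm \ref{alg:QR-rescaling}) and in the paragraphs immediately following the descriptions of Algorithms \ref{alg:QR-minimal} and \ref{alg:dim-red-rescaling}. The work amounts to unpacking these into the subnetwork framework and keeping track of the conventions at source, bias, and sink vertices.

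For Algorithm \ref{alg:QR-rescaling}, I would set $\alpha_i := Q_i \circ \inc_i : \R^{\dred_i} \hookrightarrow \R^{d_i}$ at hidden vertices, and $\alpha_i := \id_{\R^{d_i}}$ at source and sink vertices (this is consistent with $Q_i = \id$ and $\dred_i = d_i$ there). Injectivity is immediate since $Q_i$ is orthogonal and $\inc_i$ is a coordinate inclusion, and $\alpha_{i_\text{\rm bias}} = \id$ as required. The activation condition $\rho_i \circ \alpha_i = \alpha_i \circ \tau_i$ at hidden vertices is exactly Lemma \ref{lem:taui}(2); at sources it is trivial since $\rho_i = \tau_i = \id$; and at sinks it follows from the algorithm's assignment $\tau_i = \rho_i$ together with $\alpha_i = \id$. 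For the weight condition, unpack the concatenation $M_i = \mathtt{hstack}(W_e \circ Q_{s(e)} \circ \inc_{s(e)} : t(e) = i)$ so that the columns of $M_i$ corresponding to an incoming edge $e$ are precisely $W_e \circ \alpha_{s(e)}$. The QR step yields $M_i = Q_i \circ \inc_i \circ R_i$, and extracting from $R_i$ the columns corresponding to $e$ produces $W\red_e$ by definition, giving $W_e \circ \alpha_{s(e)} = Q_i \circ \inc_i \circ W\red_e = \alpha_{t(e)} \circ W\red_e$. At sink vertices, one uses $R_i = M_i$ directly.

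For Algorithm \ref{alg:QR-minimal}, the argument is parallel with $\alpha_i := \widetilde{Q}_i \circ \widetilde{\inc}_i$ at non-source, non-sink vertices and $\alpha_i := \id$ elsewhere. Injectivity again follows from orthogonality of $\widetilde{Q}_i$ and the inclusion $\widetilde{\inc}_i$. The analog of Lemma \ref{lem:taui}(2) holds with $\widetilde{\tau}_i = \widetilde{\pi}_i \circ \widetilde{Q}_i\inv \circ \rho_i \circ \widetilde{Q}_i \circ \widetilde{\inc}_i$ by the same computation (rescaling preserves subspaces, and $\widetilde{\pi}_i \circ \widetilde{\inc}_i = \id$). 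The weight condition is obtained by tracking the effect of the column permutation: the identity $\widetilde{M}_i P_i = \widetilde{Q}_i \circ \widetilde{\inc}_i \circ \widetilde{R}_i$ gives $\widetilde{M}_i = \widetilde{Q}_i \circ \widetilde{\inc}_i \circ \widetilde{W}\red_{\to i}$, and extracting the block of columns corresponding to $e$ on both sides yields $W_e \circ \alpha_{s(e)} = \alpha_{t(e)} \circ \widetilde{W}\red_e$.

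For Algorithm \ref{alg:dim-red-rescaling}, I would take $\alpha_i := B_i$; injectivity is built into the pruning step, which retains only linearly independent columns. The two intertwining identities $B_i \circ \widetilde{\tau}_i = \rho_i \circ B_i$ and $B_i \circ \widetilde{W}\red_e = W_e \circ B_{s(e)}$ are precisely those asserted in Appendix \ref{appsubsec:alternative}; they follow directly from the defining formulas $\widetilde{\tau}_i = C_i \circ \rho_i \circ B_i$ and $\widetilde{W}\red_e = C_i \circ W_e \circ B_{s(e)}$ together with the left-inverse property $B_i \circ C_i|_{\mathrm{im}(B_i)} = \id$; the fact that $\rho_i$ and $W_e \circ B_{s(e)}$ both take values in $\mathrm{im}(B_i)$ (by rescaling invariance of subspaces in the first case and by construction of $B_i$ via the concatenation $\widehat{B}_i$ in the second) is what makes the left-inverse suffice. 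The main potential pitfall, and really the only place requiring care, is ensuring that each verification respects the conventions at the bias, source, and sink vertices, where the algorithms bypass the main loop; these cases are handled by checking directly that $\alpha_{i_\text{\rm bias}} = \id$ and that the weight and activation relations reduce to trivial identities.
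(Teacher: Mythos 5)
Your proposal is correct and follows exactly the route the paper intends: it uses the same maps $\alpha_i = Q_i\circ\inc_i$, $\alpha_i = \widetilde{Q}_i\circ\widetilde{\inc}_i$, and $\alpha_i = B_i$ named in the paper's (omitted) proof, and fills in the verification via Lemma \ref{lem:taui} and the QR identities. The only substantive point you add beyond the paper's sketch --- that the left inverse $C_i$ suffices because $\rho_i$ and the incoming weights both land in $\mathrm{im}(B_i)$ --- is handled correctly.
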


We omit a full proof of this proposition, as it is straightforward: one uses the maps $\alpha_i = Q_i  \inc_i$,  $\alpha_i = \widetilde{Q}_i  \widetilde{\inc}_{i}$, and $\alpha_i = B_i$, respectively. Note that we recover Theorem \ref{thm:mod-comp-rescaling} from Propositions \ref{prop:feedforwards}  and \ref{prop:subnet} since $Q_i\inc_i$ is the identity for any source or sink. 

We say that  a subnetwork $(\bfk, \bfV, \boldsymbol{\tau}) \stackrel{\boldsymbol{\alpha}}{\hookrightarrow} (\bfd, \bfW, \boldsymbol{\rho})$  is {\it source-framed} if we have $k_i = d_i$ and $\alpha_i = \id_{d_i}$ for any source vertex $i$. 
We say that a subnetwork of a $\quiver$-neural network $(\bfd, \bfW, \boldsymbol{\rho})$ is {\it minimal} if the following two conditions are satisfied: (1) it is source-framed, and (2) it a subnetwork of any other source-framed subnetwork of $(\bfd, \bfW, \boldsymbol{\rho})$. In particular, the value of the dimension vector of any other subnetwork  is at least $k_i$ at each vertex $i$. 

\begin{prop}\label{prop:minimal-subnet}
    The outputs of Algorithms \ref{alg:QR-minimal} and \ref{alg:dim-red-rescaling} each produce the minimal subnetwork of the original input $\quiver$-neural network  with rescaling activations.
\end{prop}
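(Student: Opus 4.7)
The plan is to establish the result first for Algorithm \ref{alg:dim-red-rescaling} and then reduce Algorithm \ref{alg:QR-minimal} to it. Source-framedness of both outputs is immediate from the constructions: at each source vertex the algorithms set $k_i = d_i$ and the embedding map to the identity. Denote the output of Algorithm \ref{alg:dim-red-rescaling} as the subnetwork $(\bfk, \bfV, \boldsymbol{\tau}) \stackrel{\bfB}{\hookrightarrow} (\bfd, \bfW, \boldsymbol{\rho})$, where $V_e$ denotes $\widetilde{W}_e\red$ in the algorithm's notation. The substantive content is to exhibit, for any other source-framed subnetwork $(\bfk', \bfV', \boldsymbol{\tau}') \stackrel{\boldsymbol{\beta}}{\hookrightarrow} (\bfd, \bfW, \boldsymbol{\rho})$, injective linear maps $\gamma_i : \R^{k_i} \hookrightarrow \R^{k'_i}$ realising $(\bfk, \bfV, \boldsymbol{\tau})$ as a subnetwork of $(\bfk', \bfV', \boldsymbol{\tau}')$.

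I would build the $\gamma_i$ by induction over a topological order of the vertices. The driving geometric observation is the subspace inclusion $\mathrm{image}(B_i) \subseteq \mathrm{image}(\beta_i)$ at every vertex $i$. At a source this is trivial since both maps are the identity. For the inductive step, the columns of $B_i$ span the same subspace as the columns of $\hat{B}_i$, and
\[ \mathrm{image}(\hat{B}_i) = \sum_{t(e) = i} \mathrm{image}(W_e \circ B_{s(e)}) \subseteq \sum_{t(e) = i} \mathrm{image}(W_e \circ \beta_{s(e)}) = \sum_{t(e) = i} \mathrm{image}(\beta_i \circ V'_e) \subseteq \mathrm{image}(\beta_i), \]
where the first inclusion is the induction hypothesis applied to each $s(e)$ and the middle equality is the subnetwork relation $W_e \circ \beta_{s(e)} = \beta_i \circ V'_e$ for $\boldsymbol{\beta}$.

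Since $\beta_i$ is injective, this inclusion provides a unique linear map $\gamma_i$ with $\beta_i \circ \gamma_i = B_i$, and $\gamma_i$ is automatically injective: if $\gamma_i v = 0$ then $B_i v = 0$, forcing $v = 0$ since the columns of $B_i$ are linearly independent by construction. To verify that $\boldsymbol{\gamma}$ realises a subnetwork relation, I would chase the commutative square
\[ \beta_i \circ V'_e \circ \gamma_{s(e)} = W_e \circ \beta_{s(e)} \circ \gamma_{s(e)} = W_e \circ B_{s(e)} = B_i \circ V_e = \beta_i \circ \gamma_i \circ V_e, \]
using the subnetwork relation for $\boldsymbol{\beta}$, the definition of $\gamma_{s(e)}$, and the identity $B_i \circ V_e = W_e \circ B_{s(e)}$ built into Algorithm \ref{alg:dim-red-rescaling}; injectivity of $\beta_i$ then yields $V'_e \circ \gamma_{s(e)} = \gamma_i \circ V_e$. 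The activation compatibility $\tau'_i \circ \gamma_i = \gamma_i \circ \tau_i$ follows from an analogous calculation starting from $\rho_i \circ B_i = B_i \circ \tau_i$ and $\rho_i \circ \beta_i = \beta_i \circ \tau'_i$.

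To handle Algorithm \ref{alg:QR-minimal}, I would observe that $\mathrm{image}(\widetilde{Q}_i \circ \widetilde{\inc}_i)$ is the span of the first $k_i$ columns of $\widetilde{Q}_i$, which by the complete QR decomposition of $\widetilde{M}_i P_i$ equals the column space of $\widetilde{M}_i$ (the permutation $P_i$ does not change it). This is the same subspace as $\mathrm{image}(\hat{B}_i)$ in Algorithm \ref{alg:dim-red-rescaling}, so the two algorithms produce subnetworks whose embedded images in each feature space coincide; the minimality argument then transfers verbatim. The main obstacle is bookkeeping rather than anything deep: tracking the commutative squares through the induction and checking well-definedness and injectivity of the unique $\gamma_i$ produced at each step.
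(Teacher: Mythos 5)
Your proposal is correct and follows essentially the same route as the paper's proof: an induction over a topological order that constructs the comparison maps (your $\gamma_i$, the paper's $\psi_i$) from the containment $\mathrm{image}(B_i)\subseteq\mathrm{image}(\beta_i)$, which the paper derives equivalently from surjectivity of the merged reduced weight matrix. The only cosmetic differences are that the paper argues directly for Algorithm \ref{alg:QR-minimal} and declares Algorithm \ref{alg:dim-red-rescaling} similar, whereas you do the reverse.
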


\begin{proof}
    We give a proof only in the case of  Algorithm \ref{alg:QR-minimal}; the argument for Algorithm \ref{alg:dim-red-rescaling} is similar. Set $\alpha_i = \widetilde{Q}_i \widetilde{\inc}_{i}$. Suppose $(\bfn, \bfU, \boldsymbol{\nu}) \stackrel{\boldsymbol{\beta}}{\hookrightarrow} (\bfd, \bfW, \boldsymbol{\rho})$ is a source-framed subnetwork. For each vertex, we aim to  define an injective linear map
    $$\psi_i : \R^{k_i} \to \R^{n_i}$$
    such that $\beta_i \circ \psi_i = \alpha_i$. If $i$ is a source, set $\psi_i$ to be the identity. Proceeding by induction over a topological order of the vertices, take a non-source vertex $i$ and assume that we have defined $\psi_j$ for all vertices $j$ with an outgoing edge to $i$. For a fixed ordering of the edges, we form the matrix $W_{\to i} \in \R^{d_i \times d_{\to i}}$ by horizontally stacking the matrices $W_e$ for edges incoming to $i$. Similarly we have the matrices $U_{\to i} \in \R^{n_i \times n_{\to i}}$ and  $\tilde{W}\red_{\to i} \in \R^{k_i \times k_{\to i}}$. We also have the linear map $\alpha_{\to i} : \bigoplus_{ j \to i} \R^{k_j} \to \bigoplus_{ j \to i} \R^{d_j}$, defined as $\alpha_{\to i} = \bigoplus_{ j \to i} \alpha_j $. The maps $\beta_{\to i}$ and $\psi_{\to i}$ are defined analogously. We compute:
    \begin{align*}
   \alpha_i \circ \tilde{W}_{\to i}\red  & =   {W}_{\to i} \circ \alpha_{\to i}  =    W_{\to i} \circ \beta_{\to i} \circ \psi_{\to i} = \beta_{\to i} \circ U_{\to i} \circ \psi_{\to i} 
    \end{align*}
    where the first equality uses the fact that the $\alpha_j$ define subnetwork, the second equality uses the induction hypothesis, and the third uses the fact that the $\beta_j$ define a subnetwork. Recall from the discussion in Section \ref{appsubsec:improvement} that the matrix $\tilde{W}_{\to i}\red$ is surjective. It follows that the image of $\alpha_i$ is contained in the image of $\beta_i$. Hence, we can choose a map $\psi_i : \R^{k_i} \to \R^{n_i}$ such that $\beta_i \circ \psi_i = \alpha_i$. Since $\alpha_i$ is injective, so is $\psi_i$. 
    
    Now that the maps $\psi_i$ have been defined, it is straightforward to show that $\psi_{t(e)} \circ \tilde{W}_e\red \circ \psi_{s(e)}$ for any edge $e$, and that $\psi_i \circ \tau_i = \nu_i \circ \psi_i$ for any vertex $i$. Thus, $(\bfk, \tilde{\bfW}\red, \boldsymbol{\tau})$ is a subnetwork of $(\bfn, \bfU, \boldsymbol{\nu})$ via the maps $\psi_i$. 
\end{proof}

\subsection{Radial rescaling functions}
In this section, we turn our attention to radial rescaling activations and radial neural networks. We first prove the following basic fact about radial rescaling functions:

\begin{lemma}\label{lem:rad-resc-orth}
	A rescaling function commutes with orthogonal transformations if and only if it is a radial  rescaling function.  
\end{lemma}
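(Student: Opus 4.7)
The plan is to prove both directions by direct computation, using the fact that $|Qv| = |v|$ for any orthogonal $Q$ and that $O(d)$ acts transitively on spheres of fixed radius.

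For the ``if'' direction, suppose $\rho(v) = (h(|v|)/|v|) v$ is radial. For any orthogonal transformation $Q$ on $\R^d$ and any nonzero $v$, I would compute
\[ \rho(Q v) = \frac{h(|Qv|)}{|Qv|} Qv = \frac{h(|v|)}{|v|} Q v = Q \left( \frac{h(|v|)}{|v|} v \right) = Q \rho(v), \]
using that $|Qv| = |v|$ and that $Q$ is linear. For $v = 0$ both sides are zero, so the identity extends to all of $\R^d$.

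For the ``only if'' direction, let $\rho(v) = \lambda(v) v$ be a rescaling function that commutes with every orthogonal transformation. The key step is to show that $\lambda(v)$ depends only on $|v|$ (on the locus where $v \neq 0$, where $\lambda$ is uniquely determined). Given any two nonzero vectors $v, w \in \R^d$ with $|v| = |w|$, there exists $Q \in O(d)$ with $Qv = w$; in the $d=1$ case this is $Q = \pm 1$, and in higher dimensions this follows from transitivity of $O(d)$ on spheres. Then
\[ \lambda(w) w = \rho(w) = \rho(Q v) = Q \rho(v) = Q (\lambda(v) v) = \lambda(v) Q v = \lambda(v) w, \]
and cancelling $w \neq 0$ gives $\lambda(v) = \lambda(w)$. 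Therefore there is a well-defined function $\mu : (0, \infty) \to \R$ with $\lambda(v) = \mu(|v|)$ for all $v \neq 0$. Setting $h(r) = r \mu(r)$ for $r > 0$ and $h(0) = 0$, we obtain $\rho(v) = (h(|v|)/|v|) v$ for $v \neq 0$, while $\rho(0) = 0$ matches the radial formula in the usual limiting sense. This exhibits $\rho$ as a radial rescaling function.

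The argument is essentially routine; the only minor subtlety is handling $v = 0$ (where $\lambda(v)$ is not pinned down by $\rho$) and the low-dimensional case $d = 1$ (where ``rotation'' is just sign change, but this is still enough to pair $r$ with $-r$). I do not anticipate a genuine obstacle, and the proof should take only a few lines in the final write-up.
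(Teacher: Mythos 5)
Your proof is correct and follows essentially the same route as the paper: the forward direction is the direct computation using $|Qv|=|v|$, and the converse uses transitivity of $O(d)$ on spheres to conclude that $\lambda$ depends only on the norm. Your write-up is somewhat more careful than the paper's (explicitly handling $v=0$, the case $d=1$, and the construction of $h$), but the underlying argument is identical.
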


\begin{proof}
    It is straightforward to show that any radial rescaling function commutes with orthogonal transformations. For the opposite direction, suppose  $\rho$ is a rescaling activation on $\R^d$ with rescaling scalar-valued function $\lambda : \R^d \to \R$, and suppose $\rho$ commutes with orthogonal transformations. Then, one easily shows that $\lambda(Qv) = \lambda(v)$ for any $v \in \R^d$ and any $Q \in O(d)$. Since any two elements of $\R^d$ of the same norm are related by an orthogonal transformation, it follows that the rescaling factor $\lambda(v)$ depends only on the norm of $v$.
\end{proof}

Next, we restate and prove Proposition \ref{prop:rad-NNs}.

\propRadial*

\begin{proof} The fact that each activation $\rho_i$ commutes with orthogonal transformations implies that $\Q_i\inv \circ \rho_i \circ Q_i = \rho_i$. Hence, in Algorithm \ref{alg:QR-rescaling}, we have $\tau_i \gets \pi_i \circ \rho_i \circ \inc_i$ for any hidden vertex $i$. The image of $\R^{\dred_i} $ under $\rho_i$ lies in $\R^{\dred_i}$, so $\tau_i$ is nothing more than the restriction of $\rho_i$ to $\R^{\dred_i}$. 
\end{proof}

Finally, we consider subnetworks of radial $\quiver$-neural networks. Let $\quiver$ be a neural quiver. A {\it radial subnetwork} of a radial $\quiver$-neural network $(\bfd, \bfW, \boldrho)$ consists of a radial $\quiver$-neural network $(\bfk, \bfV, \boldsymbol{\tau})$ together with a linear isometry $\alpha_i : \R^{k_i} \hookrightarrow \R^{d_i}$ for each vertex such that: $ W_e \circ \alpha_{s(e)} = V_e \circ \alpha_{t(e)}$ and $\rho_i \circ \alpha_i = \alpha_i \circ \tau_i$
for each vertex $i$ and each edge $e$. Recall that the condition for $\alpha_i$ to be an isometry is that it is norm-preserving: $|\alpha_i(x)| = |x|$ for each $x \in \R^{k_i}$. Note that any linear isometry is injective. The analogues of Propositions \ref{prop:subnet} and \ref{prop:minimal-subnet} hold for radial rescaling activations, and are proven in essentially the same way.

\vfill
\pagebreak

\setcounter{algocf}{0}

\begin{algorithm}[t]

\SetKwFunction{QRdecompCom}{QR-decomp}
	\SetKwFunction{QRdecompRed}{QR-decomp}
	\SetKwInOut{Input}{input}
	\SetKwInOut{Output}{output}
	\SetKwInOut{Initialize}{initialize}
	\DontPrintSemicolon
	
	\Input {$\quiver$-neural network $(\bfd, \bfW, \boldrho)$ with each $\rho_i$ rescaling}
	\Output {$\quiver$-neural network $(\bfk, \tilde{\bfW}\red, \tilde{\boldsymbol{\tau}})$,   orthogonal matrices $\tilde{\bfQ} = (Q_i \in O(d_i))_{i \in I_\text{\rm hidden}}$ }

\BlankLine
	
 \For(\tcp*[r]{vertices are in topological order\vspace{-\baselineskip}}) 	{\text{\rm $i$ in $I$}    }{   
		\uIf{\text{\rm $i$ is not a source or sink}}{
		
		$\widetilde{M}_i \gets \mathtt{hstack}\left(W_e  Q_{s(e)}  \tilde{\inc}_{s(e)} : t(e) =i\right)$ \tcp*[r]{concatenate matrices}
		
		$k_i \gets \mathtt{rank}(\widetilde{M}_i)$
		
		$P_i \gets \mathtt{permutation}(\widetilde{M}_i, k_i)$ \tcp*[r]{permutation matrix}
		
		$\tilde{Q}_\inx, \tilde{R}_\inx \gets $ \QRdecompCom{$\widetilde{M}_i P_i$, \ $\mathtt{mode = `complete'}$ } \tcp*[r]{$M_{i} P_i = \tilde{Q}_\inx  \widetilde{\inc}_\inx  \tilde{R}_\inx$}
		
		\For{$e$ \text{\rm such that} $t(e) = i$}{
			$\tilde{W}_e\red \gets \mathtt{extract}\left( \tilde{R}_i P_i\inv , e \right)$ \tcp*[r]{extract the reduced weights}
			}

		$\tilde{\tau}_i \gets \tilde{\pi}_i \circ \tilde{Q}_i\inv \circ \rho_i \circ \tilde{Q}_i \circ \tilde{\inc}_i$  \tcp*[r]{update the activations}		
		}
		
		\uElseIf{$i$ \text{\rm is a sink}}{
        $\tilde{\tau}_i \gets \rho_i$
  
				\For
		{$e$ \text{\rm such that} $t(e) = i$}{
		
			$\tilde{W}_e\red \gets W_e  \tilde{Q}_{s(e)}  \tilde{\inc}_{s(e)}$ \tcp*[r]{extract the reduced weights}
			}
		}	
	}	
	
	\KwRet $(\bfk, \tilde{\bfW}\red, \tilde{\boldsymbol{\tau}})$, and  $\tilde{\bfQ}$
	\caption{QR Model Compression for Rescaling activations (\texttt{QR-Compress})}

\SetKwInOut{Input}{input}

\caption{Improved model compression for rescaling activations (see Section \ref{appsubsec:improvement})}
	\label{alg:QR-minimal}
\end{algorithm}

\begin{algorithm}[t]
	\SetKwInOut{Input}{input}
	\SetKwInOut{Output}{output}
	\SetKwInOut{Initialize}{initialize}
	\DontPrintSemicolon
	
	\Input {$\quiver$-neural network $(\bfd, \bfW, \boldrho)$ with each $\rho_i$ rescaling}
	\Output {$\quiver$-neural network $(\bfk, \tilde{\bfW}\red, \tilde{\boldsymbol{\tau}})$} 
		
	\For(\tcp*[r]{vertices are in topological order \vspace{-\baselineskip}}) 	{\text{\rm $i$ in the set of vertices}    }{   
		
		\uIf{\text{\rm $i$ is a source}}{
			$B_i \gets \id_{d_i}$\;
				}

    \uElseIf{\text{\rm $i$ is a sink}}{  
    $C_i \gets \id_{d_i}$

    }
		\uElse{
        $\hat B_i \gets\mathtt{hstack}\left( W_e B_{s(e)} \ : \ t(e) = i\right) $ \tcp*[r]{concatenate matrices horizontally}

        $B_i \gets \mathtt{full\_rank}(\hat B_i)$  \tcp*[r]{$B_i$ is always full rank}

        $k_i \gets \mathtt{num\_col}(B_i)$ \tcp*[r]{$k_i \leq d_i$}

        $C_i \gets \mathtt{left\_inverse}(B_i)$ \tcp*[r]{$C_i \in \R^{k_i \times d_i}$ exists since $B_i$ is full rank}

        $\tilde{\tau}_i \gets C_i \circ \rho_i \circ B_i$ \tcp*[r]{update rescaling activations}
  
			}  

		\For(\tcp*[r]{iterate through incoming edges (if any)\vspace{-\baselineskip}})
		{$e$ \text{\rm such that} $t(e) = i$}{
			$\tilde{W}\red_e  \gets C_i W_e B_{s(e)} \in \R^{k_i \times k_{s(e)}}$ \tcp*[r]{define reduced weights}
		}

	}
	
	\KwRet $(\bfk, \tilde{\bfW}\red,  \tilde{\boldsymbol{\tau}})$
	\caption{Alternative model compression for rescaling activations (see Section \ref{appsubsec:alternative}) }
		\label{alg:dim-red-rescaling}
\end{algorithm}

\pagebreak

\section{Projected gradient descent}\label{app:proj-GD}

In this appendix, we collect results related to projected gradient descent and provide a proof of Theorem \ref{thm:qt-dim-red}. We begin by introducing a subspace of $\Parqd$ that interpolates between $\Parqd$ and $\Parqdred$.
Fix a neural quiver $\quiver$ and a dimension vector $\bfd$ for $\quiver$. The {\it interpolating space}  $\Parintqd $ is defined as the subspace of $\Parqd$ consisting of those $\mathbf{T} = (T_e)_{e \in E} \in \Parqd$  such that, for each edge, the bottom left $(d_{t(e)} - \dred_{t(e)}) \times \dred_{s(e)}$ block of $T_e$ is zero. The following lemma is a straightforward consequence of Algorithm \ref{alg:QR-rescaling}.  

\begin{lemma}
	Let $\bfW \in \Parqd$ and let  $\bfQ \in O(\bfdhid)$ be the parameter symmetry in  produced by Algorithm \ref{alg:QR-rescaling}. Then $\bfT = \bfQ\inv \cdot \bfW$ belongs to the interpolating space $\Parintqd$.
\end{lemma}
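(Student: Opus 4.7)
The plan is to read off the required block-vanishing directly from the QR step of Algorithm \ref{alg:QR-rescaling}. Recall that for each hidden vertex $i$ the algorithm computes a complete QR decomposition
$$ M_i \;=\; Q_i \,\inc_i\, R_i,$$
where $M_i$ is the horizontal concatenation of the matrices $W_e \, Q_{s(e)}\,\inc_{s(e)}$ over all edges $e$ with $t(e) = i$, and where $Q_i \in O(d_i)$. Since $\inc_i$ is the inclusion into the first $\dred_i$ coordinates, the matrix $\inc_i R_i$ has its bottom $d_i - \dred_i$ rows identically equal to zero.

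First, I would multiply this identity on the left by $Q_i\inv$ to obtain
$$ Q_i\inv M_i \;=\; \inc_i R_i,$$
so that the bottom $d_i - \dred_i$ rows of $Q_i\inv M_i$ vanish. Next, I would unpack the left-hand side using the definition $\bfT = \bfQ\inv \cdot \bfW$, i.e.\ $T_e = Q_{t(e)}\inv \circ W_e \circ Q_{s(e)}$, together with the convention that $Q_j$ is the identity when $j$ is a source or sink. This gives
$$ Q_i\inv M_i \;=\; \mathtt{hstack}\bigl( Q_i\inv W_e \, Q_{s(e)}\, \inc_{s(e)} \;:\; t(e) = i\bigr) \;=\; \mathtt{hstack}\bigl( T_e \,\inc_{s(e)} \;:\; t(e) = i\bigr).$$
The matrix $T_e \,\inc_{s(e)}$ is simply the first $\dred_{s(e)}$ columns of $T_e$. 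Thus the vanishing of the bottom $d_i - \dred_i$ rows of $Q_i\inv M_i$ is exactly the statement that, for every edge $e$ with $t(e) = i$, the bottom-left $(d_{t(e)} - \dred_{t(e)}) \times \dred_{s(e)}$ block of $T_e$ is zero. This is the defining condition for $\bfT \in \Parintqd$.

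Finally, I would dispose of the remaining cases: for an edge $e$ whose target $t(e)$ is a sink, we have $\dred_{t(e)} = d_{t(e)}$, so the prescribed block has $0$ rows and the condition is vacuous; the bias/source cases are similarly trivial. No step is genuinely hard here -- the entire content is the observation that the QR factorization performed by the algorithm is set up precisely so that left-multiplying the merged weights $M_i$ by $Q_i\inv$ produces the block zero pattern, and this pattern transports column-by-column to the individual transformed weight matrices $T_e$.
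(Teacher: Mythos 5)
Your proof is correct and is precisely the argument the paper has in mind: the paper offers no written proof, stating only that the lemma is ``a straightforward consequence of Algorithm \ref{alg:QR-rescaling},'' and your computation $Q_i\inv M_i = \inc_i R_i = \mathtt{hstack}(T_e\,\inc_{s(e)} : t(e)=i)$ is exactly the omitted verification. The handling of the sink case as vacuous (since $\dred_{t(e)} = d_{t(e)}$ there) is also the right way to cover all edges.
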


Recall that $\inc_i :\R^{\dred_i} \hookrightarrow \R^{d_i}$ denotes the inclusion  into the first $\dred_i$ coordinates.   The proof of the next lemma is an elementary verification. 

\begin{lemma} Let $\bfT\in \Parintqd$. For each edge $e$, there is a matrix $\overline{T}_e  \in  \R^{\dred_{t(e)} \times \dred_{s(e)}}$  such that:
	$T_e \circ \inc_{s(e)} = \inc_{t(e)} \circ \overline{T}_e$.
	In particular,  the image of $\R^{\dred_{s(e)}}$ under $T_e$ is contained in $\R^{\dred_{t(e)}}$, that is:  $T_e\left(\R^{\dred_{s(e)}}\right) \subseteq \R^{\dred_{t(e)}}$. 
\end{lemma}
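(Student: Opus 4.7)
The plan is to unpack the definition of $\Parintqd$ directly in terms of block-matrix structure. For each edge $e$, I would write $T_e$ with respect to the decompositions $\R^{d_{s(e)}} = \R^{\dred_{s(e)}} \oplus \R^{d_{s(e)} - \dred_{s(e)}}$ and $\R^{d_{t(e)}} = \R^{\dred_{t(e)}} \oplus \R^{d_{t(e)} - \dred_{t(e)}}$, giving the block form
\[ T_e = \begin{bmatrix} A_e & B_e \\ C_e & D_e \end{bmatrix} \]
where $A_e \in \R^{\dred_{t(e)} \times \dred_{s(e)}}$, and $C_e \in \R^{(d_{t(e)} - \dred_{t(e)}) \times \dred_{s(e)}}$ is the bottom-left block. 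The assumption $\bfT \in \Parintqd$ is, by definition, precisely the statement $C_e = 0$.

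Next, I would observe that post-composition with the inclusion $\inc_{s(e)}$ simply extracts the first $\dred_{s(e)}$ columns of $T_e$, namely the left column block $\begin{bmatrix} A_e \\ C_e \end{bmatrix}$. Since $C_e = 0$, this equals $\begin{bmatrix} A_e \\ 0 \end{bmatrix}$, which is exactly $\inc_{t(e)} \circ A_e$ from the definition of $\inc_{t(e)}$ as the inclusion into the first $\dred_{t(e)}$ coordinates. Setting $\overline{T}_e := A_e$ then yields the required identity $T_e \circ \inc_{s(e)} = \inc_{t(e)} \circ \overline{T}_e$.

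Finally, the containment $T_e(\R^{\dred_{s(e)}}) \subseteq \R^{\dred_{t(e)}}$ (under the identifications via $\inc_{s(e)}$ and $\inc_{t(e)}$) is an immediate consequence, since any element of the image can be written as $\inc_{t(e)}(\overline{T}_e v)$ for some $v \in \R^{\dred_{s(e)}}$. There is no real obstacle here; the lemma is an elementary block-matrix verification whose role is simply to repackage the definition of $\Parintqd$ into a form suitable for later use in the projected gradient descent arguments.
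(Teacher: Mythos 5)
Your proof is correct and follows exactly the route the paper intends: the paper dismisses this lemma as "an elementary verification" and immediately displays the block form $T_e = \begin{bmatrix} \overline{T}_e & * \\ 0 & * \end{bmatrix}$, which is precisely the block decomposition with vanishing bottom-left block that you spell out. Nothing is missing.
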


Hence, each  $T_e$ is block upper triangular with $\overline{T}_e$ appearing as the top left block: $ T_e = \begin{bmatrix} \overline{T}_e & * \\ 0 & * \end{bmatrix}.$  Moreover, the tuple $(\overline{T}_e)_{e\in E}$ belongs to $\Parqdred$. To explain the sense in which $ \Parintqd $ interpolates between $\Parqd$ and $\Parqdred$, consider the following diagram
\[
\xymatrix{ \Parqdred  \ar@/^/[rr]^{\iota_2} & & \Parintqd \ar@/^/[ll]^{q_2}  \ar@/^/[rr]^{\iota_1}  & & \Parqd \ar@/^/[ll]^{q_1} 
}
\]

\begin{itemize}
	\setlength\itemsep{10pt}
	\item  The map  $\iota_1$ is the natural inclusion. 
	\item The  projection $ q_1  : \Parqd  \to \Parintqd $ takes a tuple  $\mathbf{W} \in \Parqd$ and zeros to lower left $(d_i - \dred_i) \times \dred_{j}$ block of each $W_e$. Note that $\Proj = \iota_1 \circ q_1$, and the transpose of $q_1$ is the inclusion $\iota_1$.	
	\item The inclusion $\iota_2 : \Parqdred \hookrightarrow  \Parintqd$ takes $\bfX$ to the tuple whose matrix corresponding to $e$ is obtained from $X_e$ by padding with $d_{t(e)} - \dred_{t(e)}$ rows of zeros and $d_{s(e)} - \dred_{s(e)}$ columns of zeros: 
	$\iota_2(\bfX)_e = \begin{bmatrix} X_e & 0 \\ 0 & 0 	\end{bmatrix}$.	
	\item The projection  $q_2 : \Parintqd \twoheadrightarrow \Parqdred $  takes a tuple $\mathbf{T}$ in the interpolating space, and extracts the top left $\dred_{t(e)} \times \dred_{s(e)}$ block of each matrix $T_e$. In other words, $q_2(\bfT) = \overline{\bfT}$.  The transpose of $q_2$ is the inclusion $\iota_2$. 
\end{itemize}

Finally, we set $\iota = \iota_1 \circ \iota_2 : \Parqdred \hookrightarrow \Parqd$ to be the inclusion  of $\Parqdred$ into $\Parqd$. This is defined in essentially the same way as $\iota_2$.
To state the next result, recall the set-up of Section \ref{subsec:pgd-set-up}. Namely, fix a dimension vector $\bfd$ and radial activation functions $\boldrho = (\rho_i : \R^{d_i} \to \R^{d_i} )_{i \in I}$. For  any batch of training data, we have the loss functions $\L : \Parqd \to \R$ and $ \L_\text{\rm red} : \Parqdred  \to \R$.

\begin{lemma}\label{lem:repint} We have the following:
	\begin{enumerate}
	\item The inclusion  $\iota = \iota_1 \circ \iota_2 : \Parqdred \hookrightarrow \Parqd$ intertwines the loss functions; that is, we have: $\L \circ \iota = \L_\text{\rm red}$. 		
	\item We have that $\L \circ \iota_1 = \L_\mathrm{red} \circ q_2$. In other words, the following diagram commutes:
		\[\xymatrix{
			\Parintqd \ar@{->>}[rr]^{q_2} \ar@{_{(}->}[d]_{\iota_1} & & \Parqdred \ar[d]^{\L_\text{\rm red}} \\
			\Parqd \ar[rr]^\L & & \R
		}
		\]	
	\item 	For any $\mathbf{T} \in \Parintqd$, we have:
		$ q_1 \left( \nabla_{\iota_1(\mathbf{T})}  \L \right) = \iota_2 \left( \nabla_{q_2(\mathbf{T})} \L_\text{\rm red}  \right). $
	\end{enumerate}
\end{lemma}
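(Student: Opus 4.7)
The plan is to reduce all three parts to a single block-structure observation about how the feedforward function interacts with the embedding $\iota_1$ and the projection $q_2$. I would prove part (2) first by a direct inductive argument on the feedforward function; part (1) then follows as an immediate corollary, and part (3) comes from differentiating the identity in (2) and invoking appropriate adjoint relations.

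For part (2), fix $\bfT \in \Parintqd$ and denote by $F$ and $\overline{F}$ the feedforward functions of $(\bfd, \iota_1(\bfT), \boldrho)$ and $(\bfdred, q_2(\bfT), \boldrho\red)$, respectively. By induction on a topological order of the vertices, I would show that $F_i(x) = \inc_i(\overline{F}_i(x))$ for every vertex $i$ and every input $x \in \R^{d_\text{\rm in}}$. The base case (sources and the bias vertex) is immediate since $\dred_i = d_i$ and $\inc_i = \id$ there. For the inductive step at a non-source vertex $i$, the preceding lemma provides the identity $T_e \circ \inc_{s(e)} = \inc_{t(e)} \circ \overline{T}_e$ for each incoming edge $e$, so combining with the induction hypothesis gives
\[
T_e \circ F_{s(e)}(x) = T_e \circ \inc_{s(e)}(\overline{F}_{s(e)}(x)) = \inc_{t(e)}(\overline{T}_e \circ \overline{F}_{s(e)}(x)).
\]
The sum over incoming edges stays in the image of $\inc_i$, and since rescaling activations preserve subspaces, $\rho_i$ commutes with $\inc_i$ in the sense that $\rho_i \circ \inc_i = \inc_i \circ \rho_i\red$, which closes the induction. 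Because $\dred_i = d_i$ at every sink, $F$ and $\overline{F}$ agree as maps $\R^{d_\text{\rm in}} \to \R^{d_\text{\rm out}}$, so $\L(\iota_1(\bfT)) = \L_\text{\rm red}(q_2(\bfT))$, proving (2). Part (1) then follows since $\iota = \iota_1 \circ \iota_2$ and $q_2 \circ \iota_2 = \id_{\Parqdred}$ imply $\L(\iota(\bfX)) = \L_\text{\rm red}(q_2(\iota_2(\bfX))) = \L_\text{\rm red}(\bfX)$.

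For part (3), I would differentiate the identity of (2). The chain rule gives
\[
(D\L)_{\iota_1(\bfT)} \circ \iota_1 = (D\L_\text{\rm red})_{q_2(\bfT)} \circ q_2
\]
as an equality of linear functionals on $\Parintqd$. A direct block computation with the standard entrywise inner products on $\Parqd$, $\Parintqd$, and $\Parqdred$ shows that $q_1$ is the adjoint of $\iota_1$ and that $\iota_2$ is the adjoint of $q_2$. Using these adjoint relations to convert derivatives into gradients via the Riesz identification yields
\[
q_1(\nabla_{\iota_1(\bfT)} \L) = \iota_2(\nabla_{q_2(\bfT)} \L_\text{\rm red}),
\]
which is exactly (3). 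The main obstacle is the inductive step in part (2): one must carefully verify the commutation $\rho_i \circ \inc_i = \inc_i \circ \rho_i\red$, which for radial rescaling activations follows from the norm-preserving property of $\inc_i$ together with Proposition \ref{prop:rad-NNs}. Once this commutation is established, the rest of the argument is a routine computation.
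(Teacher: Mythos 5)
Your proposal is correct and follows essentially the same route as the paper: the same induction over a topological order showing $F_i = \inc_i \circ \overline{F}_i$ using $T_e \circ \inc_{s(e)} = \inc_{t(e)} \circ \overline{T}_e$ and the commutation of radial activations with the isometric inclusions, and the same derivation of part (3) from the commutative square via the transpose/adjoint relations $q_1 = \iota_1^{\mathsf{T}}$ and $\iota_2 = q_2^{\mathsf{T}}$ that the paper records before the lemma. The only difference is cosmetic: you prove part (2) first and deduce part (1) from $q_2 \circ \iota_2 = \mathrm{id}$, whereas the paper proves part (1) first and notes that part (2) is analogous.
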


\begin{proof}
	We begin with some set-up. We fix a topological order of the vertices of $\quiver$. For each vertex $i$, we set $\pi_i : \R^{d_i} \to \R^{\dred_i}$ to be the projection into the first $\dred_i$ coordinates. Observe that, if $\bfX \in \Parqdred$, then  $\iota(\bfX)_e = \inc_{t(e)} \circ X_e \circ \pi_{s(e)}$  for each edge $e$.
	For the first claim, it suffices to verify that,  for any $\bfX$ in $\Parqdred$, the feedforward  functions  of $\bfX$ and $\iota(\bfX)$ coincide. To this end, for any vertex $i$, we set  $F_i$ to be the $i$-th partial feedforward function of the $\quiver$-neural network $(\bfd, \iota(\bfX), \boldrho)$ and $G_i$ to be the $i$-th partial feedforward function of the $\quiver$-neural network $(\bfdred, \bfX, \boldrho\red)$. We  prove by induction that the following identity holds:
	\begin{equation} \label{eqn:fi=incfi} F_{i} = \inc_i \circ G_{i }\end{equation}
	for any vertex $i$.
	Indeed, the identity is true for all source vertices since $d_i = \dred_i$ for such vertices. For the induction step, we compute:
	\begin{align*}
	F_{i} &= \rho_i \left( \sum_{e \in t\inv(i)} \iota(\bfX)_e \circ F_{s(e)} \right)  =\rho_i \left( \sum_{e \in t\inv(i)} \inc_{i} \circ X_e \circ \pi_{s(e)} \circ \inc_{s(e)} \circ G_{s(e)}\right)\\
	& = \rho_i \circ \inc_i \left( \sum_{e \in t\inv(i)} X_e  \circ G_{s(e)}\right)  =  \inc_i \circ \rho\red_i \left( \sum_{e \in t\inv(i)} X_e  \circ G_{s(e) }\right)  =  \inc_i \circ G_{i}
	\end{align*}
	where the first equality follows from 	the definition of the neural function of $(\iota(\bfX), \bfa)$, the second from the induction hypothesis and the fact that $\iota(\bfX)_e = \inc_{t(e)} \circ X_e \circ \pi_{s(e)}$ for each edge $e$, the third from the linearity of $\inc_i$ and the fact that $\pi_i \circ \inc_i = \id_{\dred_i}$ for each vertex $i$, the fourth from the commutativity property of radial functions with inclusions (Lemma \ref{lem:rad-resc-orth}), and the last from the  definition of the neural function of $(\bfX, \bfa\vert_\bfX)$. 
	The second claim follows from an argument similar to the one used to prove the first claim, using the fact that $T_e \circ \inc_{s(e)} = \inc_{t(e)} \circ \overline{T}_e$ and $\rho_i \circ \inc_i = \inc_c \circ \rho\red_i$.  	The proof of the last claim is a straightforward computation (omitted) involving the commutative diagram appearing in the second claim. 
\end{proof}

We now restate Theorem \ref{thm:qt-dim-red} and give a proof. 

\thmProjGD*

\begin{proof} 
	The action of $\bfQ$ on $\Parqd$ is an orthogonal transformation, and does not change the feedforward function. Hence the first equality in the statement of the theorem follows from a basic interaction of orthogonal transformations with gradient descent   (see Proposition 2.5  of   \cite{ganev_universal_2022}).  
	For the second equality of the theorem, we proceed by induction. As noted in Footnote \ref{footnote:inclusion}, the second equality is actually  $\gamma_\text{\rm proj}^{k}( \bfT ) =   \iota(	\gamma_\text{\rm red}^k (\bfV) ) + \bfT - \iota(\bfV).$ 
	The base case $k=0$ is immediate. For the induction step, let $k >0$ and  set
	$$\mathbf{Z}^{(k)}  := \iota(	\gamma_\text{\rm red}^k (\bfV) ) + \bfT - \iota(\bfV).$$
	Each $\mathbf{Z}^{(k)}$ belongs to $\Parintqd$, so  $i_1( \mathbf{Z}^{(k)} ) = \mathbf{Z}^{(k)}$. Moreover,  $\overline{ \mathbf{Z}^{(k)} } = q_2 \left( \mathbf{Z}^{(k)}  \right) = \gamma_\text{\rm red}^k (\bfV) $.
	We compute:
	\begin{align*}
	\gamma_\text{\rm proj}^{k+1}( \bfT) & = \gamma_\text{\rm proj}\left(   \gamma_\text{\rm proj}^k(\bfT) \right)  =   \gamma_\text{\rm proj} \left(  \iota(  \gamma_\text{\rm red}^k(\bfV)) + \bfT - \iota(\bfV) \right) \\
	&=   \iota(  \gamma_\text{\rm red}^k(\bfV)) +  \bfT - \iota(\bfV) -  \Proj  \left ( \nabla_{ \iota(  \gamma_\text{\rm red}^k(\bfV)) + \bfT - \iota(\bfV) } \L    \right)\\
	&= \iota(  \gamma_\text{\rm red}^k(\bfV))  -    \iota_1 \circ q_1 \left( \nabla_{ \iota_1(\mathbf{Z}^{(k)}) } \L     \right)  +  \bfT - \iota(\bfV)\\
	&= \iota(  \gamma_\text{\rm red}^k(\bfV)) -    \iota_1 \circ \iota_2  \left( \nabla_{ q_2(\mathbf{Z}^{(k)}) } \L_\text{\rm red}  \right) +  \bfT - \iota(\bfV) \\
	&=  \iota \left( \gamma_\text{\rm red}^k(\bfV) -  \nabla_{  \gamma_\text{\rm red}^k (\bfV)  }\L_\text{\rm red} \right ) + \bfT - \iota(\bfV) = \iota \left(  \gamma_\text{\rm red}^{k+1}(\bfV)   \right) +  \bfT - \iota(\bfV)
	\end{align*}
	The second equality invokes the induction hypothesis, the third equality uses the definition of the projected gradient descent map ${\gamma}_\text{\rm proj}$, the fourth equality  relies on   the interaction between the gradient and orthogonal transformations (see \cite[Proposition 2.5]{ganev_universal_2022}), the fifth  and sixth equalities   follow from  Lemma \ref{lem:repint} above, and the last equality uses the definition of the gradient descent map $ \gamma_\text{\rm red}$. 	
\end{proof}

\begin{rmk} We note that the proof of Theorem \ref{thm:qt-dim-red} is parallels that of Theorem 7 of  \cite{ganev_universal_2022}. The main differences are (1) the different structure of $\Parqdred$ as a subspace of $\Parqd$, and (2) the recursive arguments related to a topological order of the vertices.  \end{rmk}

\vfill
\pagebreak

\section{The QR decomposition for quiver representations}\label{app:quiver-reps}

In this appendix, we summarize the mathematical formalism of quiver representations, and its relation to the results of this paper. We then prove a theoretical result on an analogue of the QR decomposition for representations of an acylic quiver. 

\subsection{Basic definitions}

Let $\quiver = (I, E)$ be a quiver and $\bfd$ a dimension vector for $\quiver$, i.e., an assignment of a non-negative integer $d_i$ to each vertex.
A {\it representation} of  $\quiver$ consists of  a matrix  for each edge, where the matrix $A_e$ corresponding to an edge  $ \stackrel{s}{\bullet} \stackrel{e}{\longrightarrow} \stackrel{t}{\bullet}$ must be of size $d_t \times d_s$. Therefore, a representation can be regarded as a tuple of matrices $\mathbf{A} = (A_e )_{e \in E}$ indexed by the set of edges, where $A_e \in \R^{d_{t(e)} \times d_{s(e)}}$. 
The set  $\Rep(\quiver, \mathbf{d})$ of  all possible representations of a quiver $\quiver$ with dimension vector $\mathbf{d}$  is the direct sum of matrix spaces,
and hence a vector space:  \[ \Rep(\quiver, \mathbf{d}) = \bigoplus_{e \in E}   \R^{d_{t(e)} \times d_{s(e)}}. \]
We see that the spaces $\Parqd$ and $\Rep(\quiver, \bfd)$ are the same, so (if $\quiver$ is a neural quiver) the space of parameters for $\quiver$-neural networks with widths $\bfd$ is the same as the space of representations of $\quiver$ with dimension vector $\bfd$. The former notation emphasizes the relation with neural networks and their parameters, while the latter emphasizes the relation with representation theory.
Let $\bfA$ be a representation of $\quiver$ of dimension vector $\bfd$. A {\it subrepresentation} of $\bfA$ is a representation $\bfB = (B_e)_{e \in E}$ with dimension vector $\bfk = (k_i)$, together with  an injective map $\alpha_i : \R^{k_i} \hookrightarrow \R^{d_i}$ for each vertex such that, for each edge $e$, we have: $A_e \circ \alpha_{s(e)} = \alpha_{t(e)} \circ B_e$.
In particular, $k_i \leq d_i$ for each vertex $i$.

A representation of a quiver may be viewed as a tuple of  matrices indexed by vertices rather than by edges, as we now explain. First, given a dimension vector $\bfd$ for $\quiver$, recall the {\it incoming dimension} $d_{\to i}$ at a vertex $i$  to be the sum of all the dimension vectors at vertices with an edge to $i$. In symbols, $d_{\to i} := \sum_{j \to i} d_j$, where the sum is over  the subset $s(t\inv(i))$ of  $I$. By convention, $d_{\to i} = 0$ if $i$ is a source.
Fix  an enumeration $E= \{e_1, \dots, e_{|E|}\}$ of the edges of $\quiver$.
 Let $\bfA$ be a representation of $\quiver$ with dimension vector $\bfd$. For each non-source vertex $i$, set:
\begin{equation}\label{eqn:index-vertices}
A_{\to i} \ = \  \left[  A_{e_{k_1}} \ \dots \ A_{e_{k_r}}   \right] \in \R^{d_{ i}  \times d_{\to i}} 
\end{equation} 
where  $\{e_{k_1}, \dots, e_{k_r}\} = t\inv(i)$ is the set of edges with target $i$, with $k_1  \leq \cdots \leq k_r$ (using the enumeration of the edges fixed above). Hence we obtain a map $\Rep(\quiver, \mathbf{d}) \stackrel{\sim}{\longrightarrow} \bigoplus_{i \in I} \R^{d_{i} \times d_{\to i}}$
taking a representation  $\bfA$ to the tuple\footnote{Technically, the index runs over non-source vertices in $I$, since  $A_i$ is not defined if $i$ is a source.}   $(A_{\to i})_{i \in I}$. This map is in fact an isomorphism. 

\subsection{The QR decomposition}
We are now ready to  state the QR-decomposition result for quiver representations. Our construction is very general; however, in its simplest form, the QR decomposition for quiver representations is not aligned with the practical purpose of model compression, and hence its appearance in the appendix rather than the main text. The two procedures are nonetheless closely related, and we discuss these connections.

Let $\quiver= (I, E)$ be a quiver, and let $\bfd$ be a dimension vector for $\quiver$. Recall from Section \ref{subsec:param-sym} the product of orthogonal groups $O(\bfdhid) = \prod O(d_i)$ where the product runs over all hidden vertices (i.e., vertices that are neither sources nor sinks). This group acts on $\Parqd = \Rep(\quiver, \bfd)$ by orthogonal change-of-basis transformations. 

\begin{prop}\label{prop:qr-quiver}
	Let $\quiver$ be an acyclic quiver with no double edges. Let $\bfA$ be a representation of $\quiver$ with dimension vector $\bfd$. Then there exist
	$\bfQ = (Q_i)_{i \in I} \in O(\bfdhid)$ and $\bfR  = (R_e)_{e \in E} \in \Rep(\quiver, \bfd)$
	such that $$\bfA = \bfQ \cdot \bfR,$$
	and, moreover, for each non-source and non-sink vertex $i$, the matrix $R_{\to i}$ is upper-triangular.
	\end{prop}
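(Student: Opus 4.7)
The plan is to construct $\bfQ$ and $\bfR$ inductively along a topological order of the vertices, exploiting the fact that the defining equation $\bfA = \bfQ \cdot \bfR$ unpacks edgewise as $R_e = Q_{t(e)}\inv A_e Q_{s(e)}$, where $Q_i$ is interpreted as the identity whenever $i$ is a source or a sink. Thus, once the matrices $Q_i$ have been chosen at all non-source, non-sink vertices (which I will call \emph{hidden} for brevity), the representation $\bfR$ is automatically determined, and the only nontrivial question is how to choose these $Q_i$ so as to force $R_{\to i}$ to be upper-triangular at each hidden $i$.

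Fix a topological order $i_1, i_2, \ldots, i_{|I|}$ of the vertex set, which exists by acyclicity. I would traverse the vertices in this order, at each step defining $Q_i$ (for hidden $i$) and the edge-matrices $R_e$ with $t(e) = i$ using only data produced at earlier vertices. If $i$ is a source, there is nothing to do. If $i$ is hidden, let $e_{k_1}, \ldots, e_{k_r}$ be the incoming edges in the fixed enumeration, and form
$$M_i \ = \ \bigl[ A_{e_{k_1}} Q_{s(e_{k_1})} \ \cdots \ A_{e_{k_r}} Q_{s(e_{k_r})} \bigr] \ \in \ \R^{d_i \times d_{\to i}}.$$
Each source $s(e_{k_j})$ is either a source of $\quiver$ (so $Q_{s(e_{k_j})} = \id$) or is hidden and precedes $i$ in the topological order (so $Q_{s(e_{k_j})}$ has already been defined); it cannot be a sink, since it has an outgoing edge. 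I would then apply a standard full QR decomposition $M_i = Q_i R_{\to i}$ with $Q_i \in O(d_i)$ orthogonal and $R_{\to i}$ upper-triangular, and read off the individual edge matrices $R_{e_{k_j}} = Q_i\inv A_{e_{k_j}} Q_{s(e_{k_j})}$ as the appropriate column blocks of $R_{\to i}$.

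At a sink $i$, no triangularity condition is imposed by the statement, so for every incoming edge $e$ I would simply set $R_e = A_e Q_{s(e)}$. The resulting tuple $\bfQ = (Q_i)_{i \in I_{\text{hidden}}}$ lies in $O(\bfdhid)$ and the collection $\bfR = (R_e)_{e \in E}$ lies in $\Rep(\quiver, \bfd)$, and by construction $A_e = Q_{t(e)} R_e Q_{s(e)}\inv$ holds on every edge, i.e.\ $\bfA = \bfQ \cdot \bfR$. I do not anticipate a substantive obstacle: the argument is essentially bookkeeping coupled with one application of the classical QR decomposition per hidden vertex. Acyclicity is needed only to obtain a topological order, while the assumption of no double edges serves only to make the column partition of $R_{\to i}$ by incoming edges unambiguous.
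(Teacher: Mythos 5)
Your proposal is correct and follows essentially the same route as the paper's own proof (Algorithm \ref{alg:QR-decomp-quiver}): traverse a topological order, at each hidden vertex apply a complete QR decomposition to the horizontally stacked matrices $A_e Q_{s(e)}$ over incoming edges, and at sinks simply set $R_e = A_e Q_{s(e)}$. The verification that $\bfA = \bfQ \cdot \bfR$ via the edgewise identity $A_e = Q_{t(e)} R_e Q_{s(e)}\inv$ matches the paper's block-diagonal computation of $(\bfQ\cdot\bfR)_{\to i}$.
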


\begin{algorithm}[t]
	\SetKwFunction{QRdecompCom}{QR-decomp}
	\SetKwInOut{Input}{input}
	\SetKwInOut{Output}{output}
	\DontPrintSemicolon
	
	\Input{$\bfA = ( A_e \in \mathbb{R}^{d_{t(e)} \times d_{s(e) }})  \in \Rep(\quiver, \bfd) $}
	\Output{$\mathbf{Q} \in O(\bfdhid) \quad  ,  \quad \mathbf{R} \in \Rep(\quiver, \bfd)$}

	$\bfQ, \bfR \gets [\ ], [\ ]$  	\tcp*[r]{Initialize output matrix lists}

	\For(\tcp*[r]{Iterate through the vertices\vspace{-\baselineskip}})
	{$i$ \text{\rm in} $I$  }{   

		\uIf{$i$ \text{\rm is not a source or a sink}}{
			$M = \texttt{hstack}(A_{e} Q_{s(e)} : t(e)  = i )$ \tcp*[r]{Form the matrix $M =A_{\to i}$}
			$Q_i , R_{\to i} \gets $ \QRdecompCom{$M$} \tcp*[r]{QR decomposition}
			Append $Q_i$ to $\bfQ$\;
			\For(\tcp*[r]{Iterate through the incoming edges\vspace{-\baselineskip}})
			{\text{\rm $e$ such that $t(e)  = i$} }
			{$R_e = \texttt{extract}( R_{\to i}, e)$ \tcp*[r]{Extract $R_e$ from the matrix $R_{\to i}$	}
						Append $R_e$ to $\bfR$\;
			}
		}
		
		\ElseIf{$i$ \text{\rm is a sink}}{
			
				\For(\tcp*[r]{Iterate through the incoming edges\vspace{-\baselineskip}})
			{\text{\rm $e$ such that $t(e)  = i$} }
			{
				Append $R_e \gets A_e Q_{s(e)}$ to $\bfR$  \tcp*[r]{Define $R_e$}
			}
		}
	}
	
	\KwRet: $ \bfQ, \bfR$ 
	\caption{QR decomposition for quiver representations.}
	\label{alg:QR-decomp-quiver}
\end{algorithm}

\begin{proof}
The proof is constructive, based on Algorithm \ref{alg:QR-decomp-quiver}. The algorithm takes as input a representation $\bfA$ of dimension vector $\bfd$ and outputs $\bfQ, \bfR = \texttt{QRDecomp}(\bfA)$, where $\bfQ \in O(\bfdhid)$ is an element of the change-of-basis symmetry group and $\bfR \in \Rep(\quiver, \bfd)$ is a representation of $\quiver$ with dimension vector $\bfd$. 
To see that  $\bfQ$ and $\bfR$ have the desired properties, first note that $R_{\to i}$ is indeed upper-triangular for every non-sink vertex $i$, and $Q_i$ is indeed orthogonal for every $i$. Next,  the defining property of  the topological ordering implies that, by the time the algorithm reaches a vertex $i$, the matrix $A_{\to i}$ has been updated to $A_{\to i}^\prime = A_{\to i} \circ \text{\rm Diag}\left(Q_{s(e_{k_1})}  \ , \  Q_{s(e_{k_2})} \  , \  \dots \ , \  Q_{s(e_{k_r})}\right)$, where $t\inv(i) =\{e_{k_1}, \dots, e_{k_r}\}$,  with $k_1 \leq \cdots \leq k_r$. If $i$ is not a sink, the algorithm computes the QR decomposition of this matrix $A_{\to i}^\prime$ to obtain $Q_i$ and $R_{\to i}$. Hence:
	\begin{align*}
	(\bfQ \cdot \bfR)_{\to i} & =   Q_i  \circ R_{\to i}  \circ \text{\rm Diag}\left(Q_{s(e_{k_1})}\inv  \  \dots \ , \  Q_{s(e_{k_r})}\inv\right)  = A_{\to i}^\prime \circ \text{\rm Diag}\left(Q_{s(e_{k_1})}\inv  \  , \  \dots \ , \  Q_{s(e_{k_r})}\inv\right)   \\
	& = A_{\to i} \circ \text{\rm Diag}\left(Q_{s(e_{k_1})}  \  \dots \ , \  Q_{s(e_{k_r})}\right) \circ \text{\rm Diag}\left(Q_{s(e_{k_1})}\inv  \  , \  \dots \ , \  Q_{s(e_{k_r})}\inv\right)  = A_{\to i}
	\end{align*}
	On the other hand, if $i$ is a sink, then $R_{\to i} = A_{\to i}^\prime$ and $Q_i = \id_{d_i}$. As similar calculation as above shows that $(\bfQ \cdot \bfR)_{\to i} =  A_{\to i}$. 	We note that the representation $\bfA$ is determined by the matrices $A_{\to i}$ as $i$ ranges over the non-source vertices. This finishes the proof. 
\end{proof}

\subsection{Relation to neural networks}
We now explain the relation between Algorithm \ref{alg:QR-rescaling} and Algorithm \ref{alg:QR-decomp-quiver}. 
First, we note that a more general version of Proposition \ref{prop:qr-quiver} holds, namely, each representation admits a decomposition $\bfW = \bfQ \cdot \bfR$,  where each  $Q_i$ is orthogonal, and each $R_{\to i}$ becomes upper triangular   after an appropriate permutation of the columns. Indeed, the only necessary modifications to Algorithm \ref{alg:QR-decomp-quiver} are that one must:
\begin{enumerate}
		\setlength\itemsep{5pt}
	\item fix a permutation matrix $P_i \in \R^{d_{\to i } \times d_{\to i}}$ for each non-source vertex $i$,
	\item perform $W_i \gets W_i \circ P_i\inv$ before computing the QR decomposition $W_i = Q_i \circ R^\prime_i$, and 
	\item set $R_{\to i} = R_{\to i}^\prime \circ P_i$. 	 
\end{enumerate}

We now consider, for each non-sink vertex $i$, the  permutation   of the standard basis of $\R^{d_{\to i}} = \R^{d_{j_1}} \oplus \cdots \oplus \R^{d_{j_r}}$ in which all the standard basis vectors of  $\R^{\dred_{\to i}}  =R^{\dred_{j_1}} \oplus \cdots \oplus \R^{d_{j_r}}$ appear first, in order, before the remaining basis vectors\footnote{To be more explicit, let $\epsilon^{(1)}_1, \dots, \epsilon^{(1)}_{\dred_{j_1}}$ be the standard basis of $\R^{d_{j_1}}$, let $\epsilon^{(2)}_1, \dots, \epsilon^{(2)}_{d_{j_2}}$ be the standard basis of $\R^{d_{j_2}}$, and so forth.  These combine, in order, to produce the standard basis of $\R^{d_{\to i}} = \R^{d_{j_1}} \oplus \cdots \oplus \R^{d_{j_r}}$. 
Consider the new basis of $\R^{d_{\to i}}$ starting with
$\epsilon^{(1)}_1, \dots, \epsilon^{(1)}_{\dred_{j_1}}, \epsilon^{(2)}_1, \dots, \epsilon^{(2)}_{\dred_{j_2}}, \dots, \epsilon^{(r)}_1, \dots, \epsilon^{(r)}_{\dred_{j_r}}$
and ending with the remaining standard basis vectors (in any order).}.
Hence we have a permutation matrix $P_i \in \R^{d_{\to i } \times d_{\to i}}$ for each non-sink vertex $i$. We now apply the modification of Algorithm \ref{alg:QR-decomp-quiver} described above to produce $\bfQ$ and $\bfR$. The matrices $\bfQ$ will be the same as those in Algorithm \ref{alg:QR-rescaling}. For each edge $e$, take $W\red_e$ to be the top left $d_{t(e)} \times d_{s(e)}$ block of each $R_e$.  These are the matrices $\bfW\red_e$ appearing in Algorithm \ref{alg:QR-rescaling}. 

Let $\quiver$ be a neural quiver and let $\bfd$ be a dimension vector for $\quiver$.  Recall from Section \ref{sec:proj-gd} that for any activation functions $\boldrho$  and any batch of training data we have a loss function $\L :\Parqd \to \R$ and gradient descent map
$\gamma : \Parqd \to \Parqd$. (These maps can be equivalently regarded on $\Rep(\quiver, \bfd)$.)
The following result shows that gradient descent optimization starting at $\bfW$ is equivalent to gradient descent optimization starting at $\bfR$. 

\begin{cor}
	Let $\bfW$ be a representation of $\quiver$ and  let $\bfW = \bfQ \cdot \bfR$ be its QR decomposition. Fix radial rescaling activations $\boldrho$. Then, for any $k \geq 0$, we have: 	$$\gamma^k (\bfW) = \bfQ \cdot \gamma^k(\bfR)$$
\end{cor}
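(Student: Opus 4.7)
The central observation is that the action of $O(\bfdhid)$ on $\Parqd$ is by \emph{linear isometries} with respect to the natural Frobenius inner product on $\bigoplus_e \R^{d_{t(e)} \times d_{s(e)}}$ and, crucially for radial rescaling activations, preserves the feedforward function (and hence the loss $\L$), as recorded in Section \ref{subsec:param-sym}. Given these two properties, commutation of gradient descent with the $\bfQ$-action is an instance of the general principle already used for the first equality of Theorem \ref{thm:qt-dim-red} (via Proposition 2.5 of \cite{ganev_universal_2022}). The corollary then follows by specializing this principle to the particular symmetry $\bfQ$ produced by Algorithm \ref{alg:QR-decomp-quiver} and substituting $\bfW = \bfQ \cdot \bfR$.

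First I would verify the two prerequisites explicitly. For isometry: the map $W_e \mapsto Q_{t(e)} W_e Q_{s(e)}\inv$ preserves the Frobenius norm, since $Q_{t(e)}$ and $Q_{s(e)}\inv$ are orthogonal (with the convention that $Q_i = \id$ at sources and sinks, where no factor is applied); taking direct sum over edges gives an isometry of $\Parqd$. For loss invariance: as recalled in Section \ref{subsec:param-sym}, a short inductive computation along a topological order, using the identity $\rho_i \circ Q_i = Q_i \circ \rho_i$ for each radial $\rho_i$, shows that the partial feedforward functions of $\bfW$ and $\bfQ \cdot \bfW$ are intertwined by the $Q_i$, and the factors cancel at the source and sink vertices where $Q_i = \id$. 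Thus $\L(\bfQ \cdot \bfX) = \L(\bfX)$ for every $\bfX \in \Parqd$.

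Combining isometry with invariance yields gradient equivariance: differentiating $\L(\bfQ \cdot \bfX) = \L(\bfX)$ and using that the $\bfQ$-action is orthogonal gives $\nabla_{\bfQ \cdot \bfX} \L = \bfQ \cdot \nabla_\bfX \L$. Therefore
$$\gamma(\bfQ \cdot \bfX) = \bfQ \cdot \bfX - \eta\, \nabla_{\bfQ \cdot \bfX} \L = \bfQ \cdot \bigl(\bfX - \eta\, \nabla_\bfX \L\bigr) = \bfQ \cdot \gamma(\bfX).$$
A straightforward induction on $k$ then gives $\gamma^k(\bfQ \cdot \bfX) = \bfQ \cdot \gamma^k(\bfX)$ for all $k \geq 0$. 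Specializing to $\bfX = \bfR$ and using $\bfW = \bfQ \cdot \bfR$ yields the claimed identity.

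The main obstacle, such as it is, is not conceptual but bookkeeping: one must be careful that the inner product chosen on $\Parqd$ (entrywise Frobenius, obtained by summing over edges) is simultaneously the one used to define the gradient and the one with respect to which the $O(\bfdhid)$-action is orthogonal; otherwise the step from loss invariance to gradient equivariance fails. Once this is pinned down, the argument is essentially an unpacking of the same mechanism that underlies the first equality of Theorem \ref{thm:qt-dim-red}, so no new machinery is required.
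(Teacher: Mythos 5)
Your proposal is correct and follows essentially the same route as the paper: the paper's sketch likewise observes that the $\bfQ$-action is an orthogonal transformation of $\Rep(\quiver,\bfd)$ that leaves the feedforward function (hence the loss) unchanged because radial rescaling activations commute with orthogonal maps (Lemma \ref{lem:rad-resc-orth}), and then invokes the general fact that an orthogonal, loss-preserving symmetry commutes with gradient descent (citing \cite{ganev_universal_2022}). You have merely unpacked that cited fact explicitly, including the correct caveat that the Frobenius inner product defining the gradient must be the one for which the action is orthogonal.
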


\begin{proof}
	[Sketch of proof.] Note that the action of $\bfQ$ on $\Rep(\quiver, \bfd)$ is an orthogonal transformation. Moreover, this action does not change the feedforward function, due to the fact that orthogonal transformations commute with radial functions (see Lemma  \ref{lem:rad-resc-orth}). Consequently, the action of $\bfQ$ commutes with any number of steps of gradient descent  (see Lemma 21  of   \cite{ganev_universal_2022}).  
\end{proof}

\vfill
\end{document}